

\documentclass[letter, 10pt, conference]{ieeeconf}      
\makeatletter
\def\endthebibliography{%
  \def\@noitemerr{\@latex@warning{Empty `thebibliography' environment}}%
  \endlist
}
\makeatother

\IEEEoverridecommandlockouts 
\overrideIEEEmargins
\pdfminorversion=4

\usepackage{amsmath}
\usepackage{amsfonts}
\usepackage{nicematrix}
\usepackage{textgreek} 
\usepackage[hidelinks]{hyperref}
\usepackage{cleveref}
\usepackage{relsize}
\usepackage{graphicx}
\usepackage{soul}
\usepackage{bm}
\usepackage{enumitem}
\usepackage{commath}
\usepackage{graphicx}
\graphicspath{{figures/}}
\usepackage{amsmath}
\usepackage{subcaption}
\usepackage{breqn}
\usepackage{booktabs}
\usepackage{empheq}
\usepackage{amsfonts}
\usepackage{amssymb}

\usepackage{amsthm}
\usepackage{mathrsfs}
\usepackage{accents}
\usepackage{thmtools}
\usepackage{thm-restate}
\usepackage{float}
\usepackage{comment}
\usepackage{algpseudocode}
\usepackage{algorithm}
\usepackage{xspace}
\usepackage{breakurl}

\usepackage{lipsum}

\usepackage{tikz}
\usepackage{pgfplots}
\usepackage{siunitx}
\pgfplotsset{compat=newest}
\usepgfplotslibrary{units}

\usepackage{cuted}
\setlength\stripsep{3pt plus 1pt minus 1pt}

\newcommand{\arxivVersion}[1]{#1}
\newcommand{\CDCversion}[1]{}

\theoremstyle{plain}
\newtheorem{theorem}{Theorem}
\newtheorem{lemma}{Lemma}

\newtheorem{assumption}{Assumption}

\newtheorem{proposition}{Proposition}

\newtheorem{problem}{Problem}
\newtheorem*{problem*}{Problem}
\newtheorem*{theorem*}{Theorem}
\newtheorem{assumption*}{Assumption}

\newcommand{\diag}{\mathrm{diag}}



\Crefname{equation}{Equation}{Equations}
\crefname{equation}{}{}
\Crefname{figure}{Figure}{Figures}
\crefname{figure}{Fig.}{Figs.}
\crefname{table}{Tab.}{Tabs.}
\Crefname{table}{Table}{Tables}
\crefname{section}{Sec.}{Secs.}
\Crefname{section}{Sec.}{Secs.}
\crefname{problem}{Problem}{Problems}
\Crefname{problem}{Problem}{Problems}
\crefname{definition}{Definition}{Definitions}
\Crefname{definition}{Definition}{Definitions}
\crefname{lemma}{Lm.}{Lms.}
\Crefname{lemma}{Lemma}{Lemmas}
\crefname{theorem}{Thm.}{Thms.}
\Crefname{theorem}{Theorem}{Theorems}
\crefname{remark}{Rmk.}{Rmks.}
\Crefname{remark}{Remark}{Remarks}
\crefname{enumi}{item}{items}
\Crefname{enumi}{Item}{Items}
\crefname{algocf}{Alg.}{Algs.}
\Crefname{algocf}{Algorithm}{Algorithms}
\crefname{assumption}{Asm.}{Asms.}
\Crefname{assumption}{Assumption}{Assumptions}
\Crefname{Property}{Property}{Properties}
\crefname{Property}{Property}{Properties}

\usepackage[backend=biber,giveninits=true,sorting=none]{biblatex}
\addbibresource{bibliography.bib}

\newcommand{\rvec}{\mv{r}}
\newcommand{\omegades}{\omega_{\mathrm{des}}}
\newcommand{\omegadesvec}{\mv{\omega}_{\mathrm{des}}}
\newcommand{\dt}{\delta}

\newcommand{\N}{\ensuremath{\mathbb{N}}}
\newcommand{\R}{\ensuremath{\mathbb{R}}}

\newcommand{\Interior}[1]{\ensuremath{ \text{Int}(#1) }}

\newcommand{\mat}[1]{ \begin{pNiceMatrix} #1 \end{pNiceMatrix}  }

\newcommand{\nS}{9}
\newcommand{\nI}{6}
\newcommand{\nthruster}{n_{th}}

\newcommand{\Rot}{ \ensuremath{\mathnormal{R}} }

\newcommand{\Rotfull}{ \tilde{R}  }
\newcommand{\Rotfullidx}[1]{ \tilde{R}_{#1}  }


\newcommand{\traj} {\ensuremath{\chi}}
\newcommand{\utrajmax}{u_{\mathrm{max}}}

\newcommand{\M}{ \ensuremath{M} }
\newcommand{\Minv}{ \ensuremath{M^{-1}} }
\newcommand{\fe}{ \ensuremath{f(e_{\omega, k})} }
\newcommand{\feidx}[1]{f(e_{\omega, #1})}


\newcommand{\fmax}{F_{\mathrm{max}}}
\newcommand{\Fphysvec}{ F^{\mathrm{th}} }
\newcommand{\fvirtvec}{ \ensuremath{ F_v } }

\newcommand{\fvirt}{ \ensuremath{\Vert \fvirtvec \Vert} }

\newcommand{\U}{\ensuremath{\mathcal{U}}}
\newcommand{\Uphys}{\ensuremath{\mathcal{U}^{\mathrm{th}}}}
\newcommand{\Uphysidx}[1]{\ensuremath{\mathcal{U}^{\mathrm{th}}_{#1}}}
\newcommand{\Ures}{\ensuremath{\mathcal{U}_{\mathrm{res}}}}
\newcommand{\Uresidx}[1]{\ensuremath{\mathcal{U}_{\mathrm{res}, #1}}}

\newcommand{\X}{\ensuremath{\mathcal{X}}}
\newcommand{\Xterm}{\ensuremath{\mathcal{T}}} 

\newcommand{\x}[2][]{%
  \ifx\relax#1\relax
    x_{#2}%
  \else
    x_{#2|#1}%
  \fi
}

\newcommand{\e}[2][]{%
  \ifx\relax#1\relax
    e_{#2}%
  \else
    e_{#2|#1}%
  \fi
}

\newcommand{\ez}{\e{0}}

\newcommand{\rr}[2][]{%
  \ifx\relax#1\relax
    \traj_{#2}%
  \else
    \traj_{#2|#1}%
  \fi
}

\newcommand{\uu}[2][]{%
  \ifx\relax#1\relax
    u_{#2}%
  \else
    u_{#2|#1}%
  \fi
}

\newcommand{\fcn}[3][]{%
  \ifx\relax#1\relax
    #2_{#3}%
  \else
    #2_{#3|#1}%
  \fi
}

\newcommand{\termC}{l_{\mathcal{T}}}

\newcommand{\Qe}{Q_e}

\newcommand{\Qu}{Q_u}
\newcommand{\Qutilde}{\bar{Q}_u}
\newcommand{\Quhat}{\hat{Q}_u}

\newcommand{\Qeempc}{\hat{Q}_{\mathrm{e}} }
\newcommand{\Quempc}{\hat{Q}_{\mathrm{u}} }

\newcommand{\utrajk}{u_{\mathrm{ref}}^k}

\newcommand{\utrajs}{u_{\mathrm{ref}}^s}

\newcommand{\ufblin}{u_{fl}^k}

\newcommand{\utermCeMPC}{u_{\hat{\mu}}^k}

\newcommand{\eom}{e_{\omega}}
\newcommand{\eomidx}[1]{e_{\omega, #1}}
\newcommand{\controlEmpc}{\hat{\mu}}
\newcommand{\controlEmpck}[1]{\hat{\mu}_{k}}

\newcommand{\contrMPC}{\mu^{\mathrm{MPC}}}
\newcommand{\contrTerm}{\mu^{\mathrm{term}}}

\newcommand{\K}{K}
\newcommand{\kone}{\kappa_1}
\newcommand{\ktwo}{\kappa_2}
\newcommand{\kthree}{\kappa_3}
\newcommand{\ki}[1]{\kappa_{#1}}

\newcommand{\Uempc}{ \hat{\mathcal{U}} }
\newcommand{\Uempcupper}{ \check{\U} }
\newcommand{\costEmpc}{\hat{l}}
\newcommand{\infsum}[1]{\ensuremath{ \sum_{k=0}^{\infty} #1 } }
\newcommand{\empc}{\hat{e}}
\newcommand{\uempc}{\hat{u}}
\newcommand{\PtermEmpc}{\hat{P}}
\newcommand{\Xtempc}{\hat{\X}}

\newcommand{\E}{\mathcal{E}}
\newcommand{\Ufl}{\mathcal{U}_{\mathrm{fl}}}
\newcommand{\Utraj}{\mathcal{U}_{\traj}}
\newcommand{\XempcFeasible}{\hat{\X}_f}

\newcommand{\rempc}{\hat{u}_{\max}}
\newcommand{\emax}{e_{\max}}

\newcommand{\fblinmax}{f_{\mathrm{max}}}

\newcommand{\Fu}{F_{\mathrm{u}}}
\newcommand{\Fuidx}[1]{F_{\mathrm{u}, #1}}

\newcommand{\classkappainf}{class-$\mathcal{K}_\infty$\xspace}

\newcommand{\pedroNotation}{} 

\ifdefined\pedroNotation
\renewcommand{\omegades}{\omega_{\mathrm{d}}}
\renewcommand{\omegadesvec}{\boldsymbol{\omega}_{\mathrm{d}}}
\renewcommand{\Fphysvec}{f_{\circ}}
\renewcommand{\Uphysidx}[1]{\ensuremath{\mathcal{F}^{\mathrm{th}}_{[#1]}}}
\renewcommand{\Uphys}{\ensuremath{\mathcal{F}^{\mathrm{th}}}}
\renewcommand{\fmax}{f^{\mathrm{max}}}
\renewcommand{\mat}[1]{ \begin{bmatrix} #1 \end{bmatrix}  }
\renewcommand{\Ures}{\ensuremath{\mathcal{U}_{\mathrm{r}}}}
\renewcommand{\Uresidx}[1]{\ensuremath{\mathcal{U}_{\mathrm{r},[#1]}}}
\renewcommand{\fvirtvec}{ \ensuremath{ u_v } }

\renewcommand{\Fu}{u_{o}}
\newcommand{\Fuk}{u_{o,k}}
\renewcommand{\Fuidx}[1]{u_{o, #1}}
\newcommand{\scu}{u_r}
\newcommand{\scuidx}[1]{\ensuremath{u_{r,[#1]}}}
\renewcommand{\fe}{ \ensuremath{\mathrm{f}(e_{\omega})} }
\renewcommand{\feidx}[1]{\ensuremath{\mathrm{f}(e_{\omega, #1})}}
\renewcommand{\rvec}{\ensuremath{r_{\scriptscriptstyle\Diamond}}}
\renewcommand{\omegadesvec}{{\omega}^{\scriptstyle\circ}_{\mathrm{des}}}

\fi


\title{\LARGE \bfseries Fault-tolerant Model Predictive Control for Spacecraft}

\author{Raphael Stöckner\textsuperscript{\dag}, Pedro Roque\textsuperscript{\dag}, Maria Charitidou\textsuperscript{\ddag} and Dimos V. Dimarogonas\textsuperscript{\dag}%
\thanks{\textsuperscript{\dag} Raphael Stöckner, Pedro Roque and D. V. Dimarogonas are with the Division of Decision and Control Systems, KTH Royal Institute of Technology, Stockholm, Sweden. E-Mail: {\tt\small $\{$stockner, padr, dimos$\}$@kth.se} }%
\thanks{\textsuperscript{\ddag} Maria Charitidou is with the Institute for Systems Research, University of Maryland College Park, USA. E-Mail: {\tt\small mchar@umd.edu} }%
\thanks{This work was supported by the H2020 ERC Grant LEAFHOUND, the Swedish Foundation for Strategic Research (SSF), the Swedish Research Council (VR), the Knut och Alice Wallenberg Foundation (KAW), and the WASP NEST DISCOWER Project.}%
}

\makeatletter
\def\footnoterule{\relax%
  \kern-5pt
  \hbox to \columnwidth{\vrule width 0.5\columnwidth height 0.4pt\hfill}
  \kern4.6pt}
\makeatother

\begin{document}

\maketitle
\thispagestyle{empty}
\pagestyle{empty}

\begin{abstract}
Given the cost and critical functions of satellite constellations, ensuring mission longevity and safe decommissioning is essential for space sustainability. This article presents a Model Predictive Control for spacecraft trajectory and setpoint stabilization under multiple actuation failures. The proposed solution allows us to efficiently control the faulty spacecraft enabling safe navigation towards servicing or collision-free trajectories. The proposed scheme ensures closed-loop asymptotic stability and is shown to be recursively feasible. We demonstrate its efficacy through open-source numerical results and realistic experiments using the ATMOS platform.
\end{abstract}

\section{Introduction}
In recent years, the safety of autonomous systems under failures has experienced increasing attention from the research community \cite{electronics9091513}. 
Spacecraft failures are particularly critical, since dysfunctional vehicles risk contributing to space debris \cite{SHAN201618} and potential Kessler syndrome.
Thus, precautionary measures are needed to prevent the creation of space debris as well as to ensure that a damaged spacecraft can safely return to service stations or deorbit.

Most underactuated spacecraft research focuses on attitude stabilization. Theoretical analysis shows that thruster-based attitude stabilization needs two or more independent paired thrusters \cite{Crouch1984} and can not be achieved with a smooth controller. In \cite{CoverstoneCarroll1996} attitude reorientation has been ensured for underactuated spacecraft using a variable structure controller for detumbling and a series of rotations.
Newer results propose switching control schemes \cite{Casagrande2008} or perturbed feedback linearization \cite{Hall2010} for global attitude stabilization.
These results assume decoupled attitude and position control, albeit this can yield infeasible inputs and despite possible loss of (small-time local) controllability for the combined position-attitude system due to actuator failures \cite{Pong2011}.

In \cite{Pong2011,Shen2015,Shen2018,Han2015} Model Predictive Control schemes (MPC) have been proposed for spacecraft control under actuator failures. In \cite{Shen2015,Shen2018} authors consider various types of failures such as effectiveness or full actuator loss as well as bias faults but assume that the system remains fully actuated. The work in \cite{Han2015} considers full actuator failures without any input constraints. To the best of our knowledge, \cite{Pong2011} is the only work that controls both the spacecraft's position and attitude under actuator failures in presence of input constraints and despite underactuation. The approach uses MPC for setpoint stabilization with guaranteed stability but employs a zero terminal set which can potentially decrease the size of the region of attraction.

In this work, an MPC control law is proposed for trajectory tracking of spacecraft in presence of multiple actuator failures and input constraints. 
Inspiration is drawn from works on control of underactuated quadcopters, where the control of some variables is dropped in order to keep others in a safe set \cite{Mueller2014,Nan2022}. The resulting controller ensures that the spacecraft  converges asymptotically to a small-scale orbit designed to ensure partial compensation of fault-driven forces. Contrary to state of the art, the stability of the closed loop system is guaranteed by novel terminal ingredients that are based on a feedback linearizing terminal controller and (linear) explicit MPC (eMPC) \cite{Alessio2009}. The proposed approach leads to larger terminal sets compared to the standard LQR method and provides more flexibility on the design of the terminal set allowing us to easily adjust the set's size along desired directions while respecting the system's constraints.

\subsubsection*{Notation}
We use $\N$ and $\R$ for natural and real numbers and $\R_{\geq 0}$ ($\R_{> 0}$) for non-negative (positive) reals.
The operator $\diag(\ldots)$ creates a diagonal matrix from given arguments and 
$\lambda(A)$ denotes an eigenvalue of $A \in \R^{n \times n}$.
$A \succ 0$ ($A \succeq 0$) means $A$ is symmetric positive (semi-) definite.
Row ranges of vectors, matrices or sets are indexed as $X_{[i:j]} = [x_i^T, ..., x_j^T]^T \in \R^{(j-i+1) \times m}$ for a matrix $X = [x_1^T, ..., x_i^T, ..., x_j^T, ... , x_{n}]^T \in \R^{n \times m}$ with rows $x_{(\cdot)} \in \R^{1 \times m}$.
The submatrix $X_{[i:j,k:l]}\in \R^{(j-i+1) \times (l-k+1)}$ denotes $i$-th to $j$-th rows and $k$-th to $l$-th columns of $X$.
The matrix $x^{\times}$ is defined s.t. $x^{\times}y=x \times y ~\forall x, y \in \R^3$.
We denote $\mathcal{U} \oplus \mathcal{V}$ and $\mathcal{U} \ominus \mathcal{V}$ as the Minkowski sum and the Minkowski (Pontryagin) difference of sets $\mathcal{U}, \mathcal{V}$. 
The interior of $\mathcal{U}$ is $\Interior{\mathcal{U}}$ and the cartesian product is $\mathcal{U} \times \mathcal{V}$.
For brevity, we write $\sum_{0 \leq i,j \leq N}(\cdot) = \sum_{i=0}^N \sum_{j=0}^N (\cdot)$. 
A \textit{\classkappainf function} $\alpha:\mathbb{R} \to \mathbb{R}$ is continuous, strictly increasing, $\alpha(0) = 0$ and $\lim_{r\to \infty} \alpha(r) = \infty$. 

\section{Background}
\label{sec:background}
In this work a thruster-controlled spacecraft 
is considered whose dynamics are given
as follows:
\begin{equation}
\label{eq:dynamics_continuous}
\begin{aligned}
    \dot{p}_r &= v_r, \\
    \dot{v}_r &= m^{-1} \Rot^T f_r,  \\
    \dot{q} &= \frac{1}{2} \Omega(\omega) q, \\
    \dot{\omega} &= J^{-1} (-\omega^{\times} J \omega + \tau ),\\ 
\end{aligned}
\end{equation}
where $p_r, v_r \in \R^3$, are the position and velocity of the spacecraft, respectively, $q \in \R^4$ the attitude quaternion in global coordinates, $\omega \in \R^3$ the angular velocity in body coordinates, $m$ the mass of the spacecraft, $J = \diag(j_0, j_1, j_2)$ the inertia matrix expressed in a principal axis system, $\Rot = \Rot(q) \in SO(3)$ the rotation matrix w.r.t. a global coordinate frame and 
$
    \Omega(\omega) = \left[ \begin{smallmatrix}
        \omega^{\times} & \omega \\
        -\omega^T & 0
    \end{smallmatrix} \right].
$
The input of the system is the force $f_r\in\R^3$ and torque $\tau\in\R^3$ in the body frame, concatenated in $\scu = [f_r^T \ \ \tau^T]^T\in\R^6$. Spacecraft are actuated by $\nthruster$ thrusters, each producing a physical thruster force $f_i \in [0, \fmax] \in \R_{\geq 0},$  $i=0, \dots, \nthruster$. 
The actuation failures considered in this work are known thruster failures, resulting in the affected thruster's input becoming fixed at a constant value $f_i = f_i^{\text{fault}} \in [0, \fmax]$. For each thruster, we define the individual constraint set $\Uphysidx{i}$ as:
\begin{align}
    \Uphysidx{i} = 
        \begin{cases}
            \{f_i^{\text{fault}} \} & \text{if thruster }i \text{ fails} \\
            \lbrack 0, \fmax \rbrack & \text{otherwise}. 
        \end{cases}
\label{eq:ind_input}
\end{align}
Based on \eqref{eq:ind_input}, we define the input set of $\Fphysvec=\mat{f_1^T &\dots & f^T_\nthruster}^T$ as $ \Uphys = \Uphysidx{1} \times \ldots \times \Uphysidx{\nthruster}.$ 

The system input $\scu$ in \cref{eq:dynamics_continuous} relates to $\Fphysvec$ through $\scu = D \Fphysvec$, where the allocation matrix $D \in \R^{6 \times \nthruster}$ is determined by the thruster positions. As $\Uphys$ is a compact, convex polyhedron, so is $\Ures = D \Uphys$. 
Here, we make the following assumption on the 
controllability of the spacecraft after the faults occur:
\begin{assumption} \label{as:zero_in_interior}
There exists a vector $\fvirtvec = \mat{f^T & 0_{1 \times 3}}^T\in \R^6$ with $f \in \R^3$ such that $\fvirtvec \in \text{Int}(\Ures)$ and $f$ is aligned to any of the principal axes of the system.
\end{assumption}
From \cref{as:zero_in_interior}, without loss of generality, the body coordinates are chosen such that a tentative $\fvirtvec$ is given by  $\fvirtvec = \mat{0 & \bar{f} & 0 & 0_{1 \times 3}}^T$, $\bar{f}>0$. 
This may require rotating from the original body coordinates to new ones. In this case, $\Ures$ (defined in body coordinates) is rotated similarly.
\Cref{as:zero_in_interior} allows setting input torque $\tau$ to zero, but the same does not necessarily hold for the input force $f$ (i.e., failures may cause continuous spacecraft acceleration). However, by controlling the angular velocity, the system can follow a small local orbit where the continuous acceleration keeps the system on the orbit.

Despite the faults, the system needs to ``robustly" track a
desired reference trajectory $x_{\text{ref}}:\R_{\geq 0}\rightarrow \R^{13}$ satisfying:
\begin{assumption} \label{as_reftraj} \label{as:traj_feasible}
    The reference trajectory $x_{\text{ref}}:\R_{\geq 0}\rightarrow \R^{13}$ is defined as 
    $x_{\text{ref}}(t)=x_{\text{ref}}^k,  t\in [t_k, t_{k+1}),$
    where $t_k:=k\dt, k\in \N,$ $\dt>0$ is the sampling period and 
    $x_{\text{ref}}^k=\mat{\chi_{k}^T & q_{\mathrm{ref}, k}^T & \omega_{\mathrm{ref}, k}^T}^T \in \R^{13}$,
    is the reference state at the $k$-th sampling instant that evolves according to:
    \begin{equation}
    x_{\text{ref}}^{k+1}=G(x_{\text{ref}}^k,\utrajk),   \label{eq:G_fuc}
    \end{equation}
    where $G:\R^{13}\times \R^6 \rightarrow \R^{13}$ is a continuously differentiable function, $\chi_{k} = \mat{p_{des,k}^T & v_{des,k}^T}^T$ is a concatenation of the desired position $p_{des,k} \in \R^3$ and velocity $v_{des,k}\in\R^3$, and $\utrajk,k\in \N$ is a known input sequence satisfying $\utrajk \in \Utraj := \{u | u^Tu \leq \utrajmax \},$ for every $k\in \N,$ where $\utrajmax \in \R_{>0}$ is chosen to ensure that $\Utraj \subset \Ures \ominus \fvirtvec.$
\end{assumption}
\Cref{as_reftraj} introduces a piece-wise continuous trajectory $x_{\text{ref}}(t)$ with the corresponding reference input satisfying the input constraints of the spacecraft. 

Based on the above, we can formally express the problem considered in this work as follows:
\begin{problem} \label{prob:main}
Consider the spacecraft dynamics  \eqref{eq:dynamics_continuous} that are subject to 
input constraints defined in 
\eqref{eq:ind_input}.
Let further \cref{as:zero_in_interior,as_reftraj} hold. Design a control law $\scu:\R_{\geq 0} \times \R^{13} \rightarrow \Ures$ such that, as $t\rightarrow\infty$, $\Vert p_r(t)-\traj_{[1:3]}(t)\Vert \leq r_s,$
where $r_s \in \R_{>0}$ is a tuning parameter.
\end{problem}

\section{Control Design}
\label{sec:control}
In this section, we design a reference tracking MPC accounting for the actuator failures of the spacecraft. 
In \cref{sec:orbiting} we propose a coordinate transformation and introduce a modified system, called from now on the \textit{orbit system}.
Then, in \cref{sec:mpc}, we design the MPC for the orbit system to ensure ``robust" tracking.

\subsection{Orbit Dynamics} 
\label{sec:orbiting}

\begin{figure}[tb]
    \centering
    \usetikzlibrary{arrows.meta}
\usetikzlibrary{calc}

\newcommand{\squareSize}{1}
\newcommand{\tiltAngle}{65}
\newcommand{\squareCenterX}{4.5}
\newcommand{\squareCenterY}{1.0}
\newcommand{\axisLength}{0.8}
\newcommand{\distanceToC}{1.8}
\newcommand{\arrowBLength}{1.5}
\newcommand{\arrowHeadSize}{4pt} 

\begin{tikzpicture}[
    >={Triangle[length=\arrowHeadSize,width=\arrowHeadSize]}, 
    font=\sffamily
]
    \draw [->] (0,0) -- (\axisLength,0) node [midway, below] {$x$};
    \draw [->] (0,0) -- (0,\axisLength) node [midway, left] {$y$};
    
    \coordinate (SquareCenter) at (\squareCenterX, \squareCenterY);
    \draw [rotate around={\tiltAngle:(SquareCenter)}] ($(SquareCenter)+(-\squareSize/2,-\squareSize/2)$) 
          rectangle 
          ($(SquareCenter)+(\squareSize/2,\squareSize/2)$);
    
    \draw [rotate around={\tiltAngle:(SquareCenter)}, ->] (SquareCenter) -- ($(SquareCenter)+(\axisLength,0)$) ; 
    \draw [rotate around={\tiltAngle:(SquareCenter)}, ->] (SquareCenter) -- ($(SquareCenter)+(0,\axisLength)$) ; 
    
    \coordinate (PointC) at ($(SquareCenter)+({-\distanceToC*sin(\tiltAngle)},{\distanceToC*cos(\tiltAngle)})$);
    \draw [dashed] (SquareCenter) -- (PointC) node [pos=0.60, above right] {$r$};
    
    \draw[black,fill=black] (PointC) circle (.5ex); 
    
    \draw [->] (0,0) -- (SquareCenter);
    
    \draw [rotate around={\tiltAngle:(SquareCenter)}, ->] ($(SquareCenter)-(0,\arrowBLength)$) -- (SquareCenter) node [pos=0.15, below left] {$\fvirt$};
    
    \node at ($0.63*(SquareCenter)+(0,0.1)$) [above left, 
           inner sep=0.1pt,
           column sep=0.5em,
           row sep=0.5em,
           scale=0.9]
    { $p_r$ };
  
    \node at ($0.7*(PointC)+(-0.2,0)$) [above left, 
       inner sep=0.1pt,
       column sep=0.5em,
       row sep=0.5em,
       scale=0.9]
    { $c$ }; 
    \draw[->] (0,0) to (PointC);

    \draw[dashed] ([shift=(20:\distanceToC))]PointC) arc (20:-200:\distanceToC);
    \draw[->] ([shift=(\tiltAngle-90:0.35))]PointC) arc (\tiltAngle-90:290:0.35) node [above right,pos=0.5] {$\omegades$};
 
\end{tikzpicture}
    \caption{Schematic of the orbiting system. The robot orbits with angular velocity $\omegades$ along the dashed orbit with radius $r$. By \cref{eq:center_p}, the robot and orbit positions are directly coupled, so the robot is always on the orbit. The controller directly controls the orbit center $c$ to track a trajectory.
    }
    \label{fig:micro_orbit}
\end{figure}
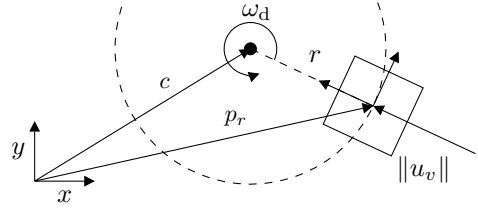

We start by designing the orbit dynamics. By \cref{as:zero_in_interior}, there exists a vector $\fvirtvec = \mat{ f^T & 0_{1 \times 3}}^T$ with $f=\mat{0 & \bar{f} & 0}^T$ aligned with the y-axis of the body frame and $\bar{f} = \mathrm{const} \in \R_{>0}$. This force, together with the centrifugal force of the spacecraft is used to bring the spacecraft on a stable orbit. 

First, introduce $\rvec = \mat{0&r&0}^T=\mathrm{const}$ and design the orbit with radius $r \in \R_{>0}$ such that the orbit center and velocity $c,v\in \R^3$ in global coordinates relate to $p_r$ through
\begin{align}
    c=p_r+R^T \rvec.  \label{eq:center_p}
\end{align}
    Then, choose $\omegades = \mathrm{const.}$
    to satisfy
    \begin{align}
        f = m\omegades^2 \rvec.  \label{eq:condition}
    \end{align}
    By \cref{eq:center_p}, the orbit center is fixed to the body frame. In this way, if the spacecraft is on  orbit, then robust trajectory tracking can be ensured by appropriately controlling the orbit dynamics. Eq. \cref{eq:condition} ensures that $f$ is aligned with $\rvec$ and always points to the orbit center as shown in  \cref{fig:micro_orbit}. In addition, it constrains the selection of $r$ and $\omegades$ since choosing $r \rightarrow 0$ implies $\omegades \rightarrow \infty.$
Let the input of the orbit system $\Fu \in \U \subset \R^6$ be a vector related to $\scu \in \Ures$ in \eqref{eq:dynamics_continuous} through
$
    \scu = \fvirtvec + \Fu,       
$
which is used to derive $\U$ from $\Ures$ as
\begin{align}
\U := \Ures \ominus \fvirtvec.
 \label{eq:u_def}
\end{align}
Differentiating \eqref{eq:center_p} two times, using $\frac{d}{dt}\Rot^T = \Rot^T \omega^{\times}$ and \eqref{eq:dynamics_continuous}, we obtain the orbit dynamics:
\begin{equation}
\label{eq:orbit_dynamics}
\begin{aligned}
    \dot{c} &= v_r + \Rot^T \omega^{\times} \rvec := v, \\
    \dot{v} &= \dot{v}_r + \Rot^T (\dot{\omega}^{\times} \rvec + \omega^{\times} \omega^{\times} \rvec) \\
    &= \Rot^T(\omega^{\times} \omega^{\times} \rvec + [J^{-1} (-\omega^{\times} J \omega)]\times \rvec + \M_{[1:3]} \fvirtvec) \\
    & \qquad + \Rot^T \M_{[1:3]} \Fu,\\
    \dot{q}&=\frac{1}{2}\Omega(\omega)q, \\
\dot{\omega}&=-J^{-1}\omega^{\times} J \omega + \M_{[4:6]} \Fu, 
\end{aligned}
\end{equation}
where $\M \in \R^{6 \times 6}$ is an invertible matrix defined as
\begin{align}
    \M &= \begin{bmatrix}
        \frac{1}{m} I_{3} & 
            \left[ \begin{smallmatrix}
                0 & 0 & -\frac{r}{j_2} \\
                0 & 0 & 0 \\
                \frac{r}{j_0} & 0 & 0
            \end{smallmatrix} \right]
         \\
        0_{3 \times 3} & J^{-1}
    \end{bmatrix}, 
    \label{eq:def_M}
\end{align}
satisfying $\mat{m^{-1} \scuidx{1:3} + [(J^{-1} \scuidx{4:6}) \times \rvec] \\
J^{-1} \scuidx{4:6}} = \M \scu$.
Notice that for $\omega = \mat{ 0 & 0 & \omegades}^T =: \omegadesvec$, the orbit acceleration in \eqref{eq:orbit_dynamics} becomes  $\dot{v} = \Rot^T \mat{0 & \frac{1}{m} \fvirt -r\omegades^2 & 0}^T + \Rot^T \M_{[1:3]} \Fu = \Rot^T \M_{[1:3]} \Fu$ using the choice from \cref{eq:condition}.
Therefore, the dynamics and parameter choices of the orbit itself ensure a partial error compensation, and due to $\fvirtvec \in \Interior{\Ures} \Rightarrow 0 \in \Interior{\U}$, \cref{eq:orbit_dynamics} is controllable.

The orbit dynamics \eqref{eq:orbit_dynamics} are discretized using Euler discretization, yielding the discrete-time dynamics:
\begin{align} 
\bar{x}_{k+1} &= \bar{x}_k + \dt \mat{ \dot{c}^T & \dot{v}^T & \dot{q}^T & \dot{\omega}^T }^T :=\bar{G}(\bar{x}_k,u_k),
\label{eq:dynamics}
\end{align}
where $\bar{x}_k:=\mat{c_k^T & v_k^T &q_k^T & \omega_k^T}^T,$ $u_k:=\Fuk$ is the state and input of the orbit dynamics at the $k$-th sampling instant, respectively and  $\dt>0$ is the same as in \cref{as_reftraj}. For the remainder of this work, we limit our attention to a class of reference trajectories for which the following holds:
\begin{assumption}\label{as:traj_dynfeas}
The function $G:\R^{13}\times \R^6 \rightarrow \R^{13}$ in \eqref{eq:G_fuc} satisfies
$
G(y_1, y_2) = \mat{\bar{G}_{[1:10]}(y_1, \Rotfull(y_{1, [7:10]}) y_2) \\ \omegadesvec},
$
for every $(y_1,y_2)\in \R^{13}\times \R^6,$ where $\bar{G}:\R^{13}\times \R^6 \rightarrow \R^{13}$ is defined in \eqref{eq:dynamics} and $\Rotfull := \left[\begin{smallmatrix}
    \Rot & 0_{3 \times 3} \\
    0_{3 \times 3} & I_3
\end{smallmatrix} \right].$

\end{assumption}

Based on \eqref{eq:dynamics} and \cref{as_reftraj,as:traj_dynfeas} we introduce the tracking error variables at time step $k$ as
$
    e_{p,k} = \mat{ c_k^T & v_k^T}^T - \traj_k, \eomidx{k} = \omega_k - \omegadesvec
$
where $e_{p,k}$ is the orbit center and velocity tracking error, respectively, and $\eomidx{k}$ is the angular velocity error. Let $e_k:=\mat{e_{p,k}^T & \eomidx{k}^T}^T.$ Then, the error dynamics are defined according to \cref{eq:orbit_dynamics} and \cref{as_reftraj,as:traj_dynfeas} as follows:
\begin{subequations}
\label{eq:err_dyn_orbiting}
\begin{align}
    e_{k+1} &= \underbrace{e_k +  
        \dt \mat{
            e_{k, [4:6]} \\
            \Rotfull^T (\feidx{k} + \M \Fuidx{k}) - \M \utrajk
        }}_{:=\phi(e_k, \Fuidx{k})}, \label{eq:errp}
\\
    q_{k+1} &= q_k + \frac{1}{2} \dt \Omega(\omegadesvec + \eomidx{k}) q_k := \varphi(q_k, e_k), \label{eq:errq}
\end{align}
\end{subequations}
where $\Rotfull$ is the same as in \cref{as:traj_dynfeas}
and $\feidx{k}$ is defined as:
{
\newcommand{\ombar}{\bar{\omega}}
\begin{align}
    \feidx{k}= \mat{
        \ombar_k^{\times} \ombar_k^{\times} \rvec + [J^{-1} (-\ombar_k^{\times} J \ombar_k)]\times \rvec + \frac{1}{m} \fvirtvec \\
        -J^{-1}\ombar_k^{\times} J \ombar_k 
    }, \label{eq:def_feom}
\end{align}
with $\ombar_k := \eomidx{k} + \omegadesvec$. Notice that $\Vert \mathrm{f}(0) \Vert = 0$ using \cref{eq:condition}.
}

\begin{assumption} \label{as:discretization_error}
For a sampling time $\dt > 0$, there exists a $\beta = \beta(\dt) > 0,$ such
$\Vert c(t)-c_k \Vert \leq \beta, ~\forall t \in [t_k,t_{k+1}), k\in \N$.
\end{assumption}
\noindent A bound $\beta$ can be obtained using \cite[Thm. 3.4]{khalil2002nonlinear} as in \cite{roque2022corridor}.

\subsection{Model Predictive Control}
\label{sec:mpc}

In the following, we propose an MPC controller for the error dynamics defined in \cref{eq:err_dyn_orbiting}. The controller employs novel terminal ingredients, partially based on eMPC, allowing for an enlarged terminal set as will be shown in \cref{sec:application}.

We propose the trajectory tracking controller for \cref{eq:err_dyn_orbiting} as
\begin{align}
\contrMPC_k=\fcn{u^{*}}{o,k} + \Rotfull \utrajk,
\label{eq:u_mpc}
\end{align}
where $\Rotfull$ is the rotation matrix from \cref{as:traj_dynfeas} evaluated at $q_k$, and $\utrajk$ the reference input defined in \cref{as_reftraj}.
The input $\fcn{u^{*}}{o,k}$ is obtained by solving \cref{eq:mpc} at each sampling time, resulting in $\mathbf{u}_{o,k} = \{ u_{o,0}, \dots, u_{o,N-1} \}$, and setting $\fcn{u^{*}}{o,k} := u_{o,0}$.
\begin{subequations}
\label{eq:mpc}
\begin{alignat}{2}
    &\underset{\mathbf{u}_{o,k}}{\text{min.}} ~ &&\sum_{s=0}^{N-1} \left[ e_s^T \Qe e_s + u_{o,s}^T \Qu u_{o,s} \right] + \termC (e_N) \\
    & ~ \text{s.t.} ~~~~
       && e_{s+1}     = \phi(e_{s}, u_{o,s} + \Rotfullidx{s} \utrajs), \\ 
    &  && q_{s+1} = \varphi(q_s, e_s), \\
    &  && u_{o,s} + \Rotfullidx{s} \utrajs    \in \U, \\
    &  && e_N       \in \Xterm     \\
    &  && \ez         = e_k, q_0 = q_k 
\end{alignat}
\end{subequations}
\noindent for $s = 0, \ldots, N-1$, 
where $\phi(e_k, \Fuidx{k}),  \varphi(q_k, e_k)$ are defined 
as in \cref{eq:err_dyn_orbiting}, $\Qe \in \R^{\nS \times \nS}$ and $\Qu \in \R^{\nI \times \nI}$, $\Qe, \Qu \succ 0$ are diagonal weight matrices, $N \in \N$ is the control horizon and $\termC (e_N)$ and $\Xterm$ are the terminal cost and terminal set, respectively, designed next.

figures/
\section{Theoretical Analysis}
\label{sec:analysis}
We analyze the stability of \eqref{eq:dynamics_continuous} with control \eqref{eq:u_mpc} using a terminal controller proposed in \cref{sec:analysis:term_contrl}, derive terminal ingredients in \cref{sec:analysis:term_set,sec:analysis:term_cost}, and prove asymptotic stability of the orbit dynamics and bounded spacecraft tracking error in \cref{sec:analysis:stability}.

\subsection{Terminal controller} \label{sec:analysis:term_contrl}
We begin the analysis by the design of a terminal controller for the orbit error dynamics in \eqref{eq:err_dyn_orbiting}. Here, we propose a nonlinear feedback control law defined as:
\begin{align}
    \contrTerm(e_k,k)
    &= \Minv \left[ -\mathrm{f}(\eomidx{k}) + \mat{ \Rot \controlEmpc_k \\ - \K \eomidx{k} }  \right] + \Rotfull \utrajk,
     \label{eq:term_contr} 
\end{align}
where $\M$ and $\mathrm{f}(e_{\omega,k})$ are defined as in \eqref{eq:def_M} and \eqref{eq:def_feom}, respectively, $\controlEmpc_k$ is a linear control law designed based on the eMPC scheme introduced later in \eqref{eq:empc_contr} and
the gain matrix $\K = \diag(\kone, \ktwo, \kthree) \in \R^{3 \times 3}$ with $0 < \ki{i} < \frac{1}{\dt}, i=1, 2, 3$ chosen to ensure that $\eom$ is asymptotically stable. 
The proposed terminal control law is defined as the sum of a nonlinear term, introduced to cancel the effect of $\mathrm{f}(\eom)$ in \eqref{eq:err_dyn_orbiting}, and an eMPC and linear control law, resp., that ensure asymptotic stability of $e_{p,k}$ and $\eomidx{k}$.
Substituting \eqref{eq:term_contr} to \eqref{eq:errp} yields:
\begin{subequations}
\label{eq: closedloop_term}
\begin{align}
e_{p,k+1} &=
    \hat{\phi}(e_{p,k}, \controlEmpc) 
   :=
    \mat{
        I_{3} & \dt I_{3} \\
        0_{3 \times 3} & I_{3}
    } e_{p,k} + \mat{
        0_{3 \times 3} \\ \dt I_{3} 
    } \controlEmpc_k,
    \label{eq:empc_sys}\\
 \eomidx{k+1} &= (I_{3} - \dt \K) \eomidx{k}.  \label{eq:omnew_ter} 
\end{align}
\end{subequations}
Finally, $\controlEmpc_k=\controlEmpc(e_k)$ in \eqref{eq:term_contr} is defined as $\controlEmpc(e_k)=\uempc_0^*,$ where $\uempc_0^*$ is the optimal solution of the multi-parametric QP program $P(e_k),$ with $P(e)$ defined as follows: 
\begin{equation}
\begin{aligned}
     &P(e): && \underset{\uempc_0, \ldots, \uempc_{\hat{N}-1}}{\text{min.}} ~&& \sum_{s=0}^{\hat{N}-1} \left[ \empc_s^T \Qeempc \empc_s + \uempc_s^T \Quempc \uempc_s \right] + \empc_{\hat{N}}^T \PtermEmpc \empc_{\hat{N}} \\
    & &&\quad ~ \text{s.t.}
        && \empc_{s+1} =\hat{\phi}(\empc_s, \uempc_s),
        ~s=0, \ldots, \hat{N}-1 \\
     & && && \uempc_s \in \Uempc, ~s=0, \ldots, \hat{N}-1 \\
     & && && \empc_{\hat{N}} \in \Xtempc, \\
     & && && \empc_0 = e
\label{eq:empc_contr}
\end{aligned}
\end{equation}
Here, $\hat{\phi}:\R^6 \times \R^3 \rightarrow \R^6$ is defined in \eqref{eq:empc_sys},
$\hat{N} \in \N$ is the planning horizon, $\Qeempc \in \R^{6 \times 6}$ and $\Quempc \in \R^{3 \times 3}$, $\Qeempc, \Quempc \succ 0$ are weight matrices,
$\Uempc \subset \U$ is a compact, polyhedral set  and $\PtermEmpc\succ 0,$ $ \Xtempc\subset \R^6$ are the terminal cost matrix and terminal set of \eqref{eq:empc_contr}, respectively, chosen as in \cite[Ch. 2.5.4]{rawlings2017model} to stabilize \eqref{eq:empc_sys}. 
The problem $P(e)$ can be solved parametrically and its solution can be calculated offline following \cite{Alessio2009}. The solution gives the optimal eMPC control law $\controlEmpc_k = \controlEmpc(\empc_k)$ and the parametric cost function $\costEmpc(\empc_0)$  in the set $\XempcFeasible= \{ e \in \R^6 | P(e) \; \text{is feasible} \} \subset \R^6$ where $P(e)$ has a recursively feasible, stabilizing solution. 
Since the problem is solved offline, the horizon $\hat{N}$ can be arbitrarily large, limited only by computational resources.

\subsection{Terminal set} \label{sec:analysis:term_set}
Next, given the terminal control law proposed in \cref{sec:analysis:term_contrl},  we design the terminal set $\Xterm$ of \eqref{eq:mpc} so that \cref{eq:errp} under \eqref{eq:term_contr} is asymptotically stable and  $(e_k,\contrTerm(e_k,k) \color{black})\in \Xterm \times \U,$ for every $k\in \N$.

We introduce the terminal set $\Xterm \subset \R^9:$
\begin{equation}
    \Xterm=\XempcFeasible \times \E,  \label{eq:terminal_set}
\end{equation}
where $\XempcFeasible \subset \R^6$, obtained in \cref{sec:analysis:term_contrl},  is the terminal set corresponding to $e_{p,k}$ and $\E \subset \R^3$ is the terminal set corresponding to $\eomidx{k}$, defined as
$
\E = \{ \eom \in \R^3 \vert -\emax \leq \eom \leq \emax\}, 
$
where the inequalities are defined elementwise and $\emax \in \R_{>0}^3$ is a vector appropriately chosen in the following. Let:
\begin{align}
\Uempcupper= \{ \uempc \in \R^3 | \Vert \uempc \Vert_2^2 \leq \rempc \}, \label{eq:uset_empc_ball}
\end{align}
where $\rempc>0$ is a parameter to be chosen such that $\Uempc \subseteq \Uempcupper \subset \U,$ $\Uempc \subset \R^6$ is the control set of the eMPC scheme defined in \eqref{eq:empc_contr} and $\U \subset \R^6$ is the control set of the orbit dynamics defined in \eqref{eq:u_def}. 

Our goal is to choose the tuning parameters $\emax\in \R^3$ and $\rempc\in \R_{\geq 0}$ such that the following are satisfied:

\begin{subequations}
\label{eq:term_cond}
\begin{align}
&e_k \in \Xterm \Rightarrow \contrTerm(e_k,k)\in \U , \quad \forall k\in \N, \label{eq:term_cond2} \\
&e_k \in \Xterm \Rightarrow e_{k+1}\in \Xterm , \quad \forall k\in \N, \label{eq:term_cond1}
\end{align}
\end{subequations}
where $e_{k+1}$ is defined according to \eqref{eq: closedloop_term} and $\contrTerm(e_k,k)$ is given in \eqref{eq:term_contr}. 
Equation \eqref{eq:term_cond2} guarantees that the terminal controller in \eqref{eq:term_contr} satisfies the desired input constraints and \eqref{eq:term_cond1} ensures that $\Xterm$ is forward invariant. 
These conditions will be used in \cref{sec:analysis:stability} to show the asymptotic stability of the orbit error dynamics. 

By continuity of \eqref{eq:def_feom} and compactness of $\E,$ for every $\eom \in \E,$ $\fe$ is bounded and so there exists a set: 
\begin{align}
    \Ufl &= \{ \nu \in \R^6 \vert \nu^T \nu \leq \fblinmax \}, \label{eq:feom_ball}
\end{align}
such that $\fe\in \Ufl,$ where  $\fblinmax$ is a parameter whose value depends on $\emax$. From \cref{as_reftraj}  $\utrajk \in \Utraj$ and using \eqref{eq:uset_empc_ball}, \eqref{eq:feom_ball} we can conclude that $\contrTerm(e,k)\in     \Minv [(\Uempcupper \times (-\K \E)) \oplus \Ufl \oplus \Utraj],$ for every $e\in \R^6 \times \E$ and $k\in \N.$ As a result, a sufficient condition for \eqref{eq:term_cond2} to hold is
$
    \Minv [(\Uempcupper \times( -\K \E)) \oplus \Ufl \oplus \M \Utraj] \subseteq \U.
    \label{eq:term_set}
$
In order to express this condition in terms of the design parameters $\emax, \rempc$ we first obtain an expression for $\fblinmax$ as the solution to 
\begin{equation}
\begin{aligned}
    &\underset{\eom\in \E}{\text{max.}} ~ && \fe^T \fe.
\label{eq:term_set_femax}
\end{aligned}
\end{equation}
With \cref{as_reftraj} and \eqref{eq:feom_ball} and after recalling that $\U$ is polyhedral due to \eqref{eq:ind_input} and \eqref{eq:u_def}, we can express  $\M \U \ominus \Utraj \ominus \Ufl$ as \cite{linke2015decomposition} 
$
    \M \U \ominus \Utraj \ominus \M \Ufl = \{x \in \R^6 |
    a_i x \leq b_i - \sqrt{\utrajmax} - \sqrt{\fblinmax}, i \in \mathcal{I}\},
$
where $\M \U := \{x \in \R^6 | a_i x \leq b_i, i \in \mathcal{I}\}.$

Using this, we can choose $\emax, \rempc$ as the optimal solution to the following optimization problem \cite[Ch. 8.4.2]{Boyd2004} that maximizes the hypervolume of $\Uempcupper \times (-\K \E)$:
\begin{equation}
\begin{aligned}
    &\underset{\rempc, \emax}{\text{max.}} ~ && 3 \log (\rempc) + \sum_{l=1}^3 \log(2 \zeta_l^T K\emax)\\
    &~~~\text{s.t.}
          && \left\Vert a_i \mat{
            \rempc I_{3} & 0_{3 \times 3} \\
            0_{3 \times 3} & 0_{3 \times 3}
          } \right\Vert_2 
          + a_i \mat{0_{3 \times 1} \\ \K \epsilon_j}  \\
          & && \qquad \leq b_i - m \utrajmax - \fblinmax, i\in \mathcal{I}, j=1,\ldots,8
\end{aligned}
\label{eq:term_set_vol_max}
\end{equation}
where $\epsilon_j \in \R^3$ are the vertices of $\E$ and $\zeta_l\in \R^3$ is the $l$-th standard basis vector.
Note that this bound exists from \cref{as:zero_in_interior} and $\fblinmax \rightarrow 0$ for $\emax \rightarrow 0$ (due to $\fe \rightarrow 0$ for $\eom \rightarrow 0.$ and \cref{eq:term_set_femax}).
The constraints ensure that \cite[chap. 8.4.2]{Boyd2004}:
\begin{align}
    \Uempcupper \times (-\K \E) \subseteq \M \U \ominus \Utraj \ominus \M \Ufl.
\label{eq:constraints_term_set_vol_max}
\end{align}

Based on the above, we can deduce the forward invariance of $\Xterm$ and the asymptotic stability of \eqref{eq:err_dyn_orbiting} under the terminal controller in \cref{eq:term_contr}.
\CDCversion{The proofs of all statements below can be found in \cite{stockner2025}.}
\arxivVersion{The proofs of all statements below can be found in \cref{sec:appendix:proofs}.}
\begin{lemma} \label{lem:stability_term_contr}
Consider the orbit error dynamics \cref{eq:err_dyn_orbiting} under the terminal control law  \cref{eq:term_contr} and the set $\Xterm$ defined in \cref{eq:terminal_set} with parameters chosen based on \eqref{eq:term_set_vol_max}. Let further \cref{as_reftraj,as:traj_dynfeas} hold. Then, \cref{eq:term_cond} holds. In addition, $e_k \rightarrow 0$ as $k\rightarrow +\infty.$ 
\end{lemma}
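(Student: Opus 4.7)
The plan is to establish the lemma in three stages, exploiting the structural decoupling that \eqref{eq:term_contr} induces on the closed-loop error dynamics. Substituting $\contrTerm$ into \eqref{eq:err_dyn_orbiting} produces \eqref{eq: closedloop_term}, i.e.\ the eMPC-controlled linear subsystem \eqref{eq:empc_sys} for $e_{p,k}$ and the coordinatewise contraction \eqref{eq:omnew_ter} for $\eomidx{k}$; the two subsystems no longer share any nonlinear coupling once the feedback-linearizing term $-\Minv\,\feidx{k}$ cancels $\feidx{k}$ in \eqref{eq:errp}. This motivates the product terminal set $\Xterm = \XempcFeasible \times \E$ and reduces the proof to (i) input inclusion, (ii) forward invariance, and (iii) asymptotic convergence, each of which can be treated per factor.

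For the input inclusion \eqref{eq:term_cond2}, I would fix $e_k=(e_{p,k},\eomidx{k})\in\Xterm$ and trace $\M\,\contrTerm(e_k,k)$ block by block using \eqref{eq:term_contr}: $\controlEmpc_k \in \Uempc \subseteq \Uempcupper$ by construction of the eMPC law on $\XempcFeasible$; $-\K\eomidx{k}\in -\K\E$ since $\eomidx{k}\in\E$; $\feidx{k}\in\Ufl$ by the definition of $\fblinmax$ through \eqref{eq:term_set_femax}; and $\utrajk\in\Utraj$ from \cref{as_reftraj}. Combining these yields membership in $(\Uempcupper\times(-\K\E))\oplus \Ufl \oplus \M\Utraj$, and invoking the containment \eqref{eq:constraints_term_set_vol_max}, which holds because \eqref{eq:term_set_vol_max} is feasible by \cref{as:zero_in_interior} together with the limit $\fblinmax\to 0$ as $\emax\to 0$, gives $\contrTerm(e_k,k)\in\U$. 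Forward invariance \eqref{eq:term_cond1} then follows componentwise: $\eomidx{k+1}=(I_3-\delta\K)\eomidx{k}$ is a diagonal scaling with factors of modulus $|1-\delta\kappa_i|<1$ under the gain bounds, so the box $\E$ maps into itself; and by the terminal-ingredients design of \cite[Ch.~2.5.4]{rawlings2017model} applied to \eqref{eq:empc_contr}, $\XempcFeasible$ is positively invariant under $\controlEmpc$.

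Finally, asymptotic convergence $e_k\to 0$ follows by combining the two factors. The angular component decays geometrically by \eqref{eq:omnew_ter}, and $e_{p,k}\to 0$ is a standard consequence of the Lyapunov decrease of $\costEmpc$ along the closed loop \eqref{eq:empc_sys}, given $\Qeempc,\Quempc\succ 0$ and the stabilizing choice of $(\PtermEmpc,\Xtempc)$. The principal obstacle I anticipate is the geometric bookkeeping hidden in \eqref{eq:term_cond2}: the rotations $\Rot$ and $\Rotfull$ appearing inside $\contrTerm$ are time-varying and attitude-dependent, so the attitude-independent inclusion \eqref{eq:constraints_term_set_vol_max} must be shown to transfer to $\contrTerm(e_k,k)$ uniformly in $k$. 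This relies on the fact that $\Uempcupper$ and $\Ufl$ were defined as Euclidean balls, hence invariant under the relevant orthogonal transformations; without this structural choice the Minkowski-sum sufficient condition would be attitude-dependent and the argument would not close.
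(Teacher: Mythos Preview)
Your proposal is correct and follows essentially the same three-step decomposition as the paper's proof: \eqref{eq:term_cond2} via the Minkowski-sum inclusion enforced by \eqref{eq:constraints_term_set_vol_max}, \eqref{eq:term_cond1} via the componentwise contraction of $\E$ together with recursive feasibility of the eMPC on $\XempcFeasible$, and $e_k\to 0$ from the stability of the two decoupled closed-loop subsystems \eqref{eq: closedloop_term}. Your final remark that the rotation-invariance of the Euclidean balls $\Uempcupper$, $\Ufl$, $\Utraj$ is what makes the input inclusion hold uniformly in $q_k$ is exactly the point the paper's construction relies on but leaves implicit in its (quite terse) proof.
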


\subsection{Terminal cost} \label{sec:analysis:term_cost}
Given the terminal set $\mathcal{T}$ obtained in \Cref{sec:analysis:term_set}, we will next design the terminal cost $l_{\mathcal{T}}:\R^9\rightarrow \R_{\geq 0}.$ First, we make the following assumption on the weight matrix $Q_e$ in \eqref{eq:mpc}:
\begin{assumption} \label{as:QQ_matrix}
The matrix $Q_e\in \R^{9\times 9}$ is a positive definite matrix of the form $\Qe := \begin{bmatrix}
        \Qeempc & 0_{6 \times 3} \\ 0_{3 \times 6} & Q_{e,\omega}
    \end{bmatrix}, $
where $\Qeempc\in \R^{6\times 6}$ and $Q_{e,\omega} \in \R^{3\times 3}.$
\end{assumption}
Given $Q_u$ in \eqref{eq:mpc} let $\Qutilde := (\Minv)^T \Qu \Minv,$ where $\M$ is defined in \eqref{eq:def_M}. For the remainder of this section, let:
$\ufblin := -\feidx{k} + \mat{0_{3 \times 1}^T & -  \eomidx{k}^T K^T}^T $
and
$\utermCeMPC := \mat{ \controlEmpc_k^T \Rot^T  & 0_{1 \times 3}}^T.$
Before we formally introduce the cost function, we provide two technical lemmas:
\begin{lemma} \label{lem2}
Consider the system \cref{eq:err_dyn_orbiting} under the terminal control law \cref{eq:term_contr} and assume $e_s \in \Xterm$ for some $s \in \N$. Then, $\sum_{k=s}^{\infty} (\ufblin)^T \Qutilde \ufblin \leq \Theta_1(e_s) + \sum_{k=s}^{\infty} \eomidx{k}^T (2 \Vert \Qutilde \Vert_2 \K^T \K) \eomidx{k},$
where $\Qutilde \succ 0$,
and $\Theta_1: \R^3 \rightarrow \R$ is a function satisfying $\Theta_1(e_0) \leq \alpha_{2,1}(\Vert e_0 \Vert),$ where $\alpha_{2,1}: \R_{\geq 0} \rightarrow \R_{\geq 0}$ is a \classkappainf function. 
\end{lemma}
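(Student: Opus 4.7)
The plan is to split $\ufblin$ into its $\feidx{k}$-dependent and $K\eomidx{k}$-dependent parts, bound the second exactly by the advertised term, and absorb the first into a $\mathcal{K}_\infty$-bounded function of the initial state $e_s$ using the asymptotic stability of $\eomidx{k}$ under the terminal law. Throughout, I will exploit that, by \cref{lem:stability_term_contr}, $e_s \in \Xterm$ implies $e_k\in \Xterm$ for all $k\geq s,$ and that the angular-velocity subsystem decouples as $\eomidx{k+1} = (I_3-\dt K)\eomidx{k}.$

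First I would apply the elementary inequality $(a+b)^T \Qutilde (a+b)\leq 2a^T\Qutilde a + 2b^T\Qutilde b,$ valid for any $\Qutilde\succ 0,$ to
\[
\ufblin = -\feidx{k} + \mat{0_{3\times 1} \\ -K\eomidx{k}}.
\]
The cross/pure-$K$ part is bounded by $\|\Qutilde\|_2\,\|K\eomidx{k}\|_2^2 = \eomidx{k}^T K^T K \eomidx{k}\cdot\|\Qutilde\|_2,$ which after the factor $2$ gives exactly the $\eomidx{k}^T(2\|\Qutilde\|_2 K^T K)\eomidx{k}$ term appearing in the bound. Summing in $k\geq s$ yields the second summand of the right-hand side.

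Next I would handle $\sum_{k=s}^{\infty}\feidx{k}^T\Qutilde\feidx{k}.$ Since $\mathrm{f}$ defined in \eqref{eq:def_feom} is continuously differentiable and satisfies $\mathrm{f}(0)=0$ (as noted after \eqref{eq:def_feom}, thanks to \eqref{eq:condition}), on the compact set $\E$ there exists a Lipschitz constant $L>0$ such that $\|\feidx{k}\|_2\leq L\|\eomidx{k}\|_2,$ and therefore $\feidx{k}^T\Qutilde\feidx{k}\leq L^2\|\Qutilde\|_2\,\|\eomidx{k}\|_2^2.$ From \eqref{eq:omnew_ter} and the gain choice $0<\ki{i}<1/\dt,$ the matrix $I_3-\dt K$ has spectral radius $\rho := \max_{i}|1-\dt\ki{i}|\in(0,1),$ so $\|\eomidx{k}\|_2^2\leq \rho^{2(k-s)}\|\eomidx{s}\|_2^2.$ Summing the geometric series gives $\sum_{k=s}^{\infty}\|\eomidx{k}\|_2^2 \leq \|\eomidx{s}\|_2^2/(1-\rho^2).$ Combining, with $c:=2L^2\|\Qutilde\|_2/(1-\rho^2),$ I define
\[
\Theta_1(e_s) := c\,\|\eomidx{s}\|_2^2 \leq c\,\|e_s\|_2^2,
\]
which is upper bounded by the \classkappainf function $\alpha_{2,1}(r):=c\,r^2,$ establishing the required property.

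The only nontrivial step is the Lipschitz estimate for $\mathrm{f}(\cdot)$: it relies on the forward invariance of $\Xterm$ (hence of $\E$) under the terminal law, so that $\eomidx{k}$ remains in a compact neighborhood of the origin where a single $L$ works. Once that and the geometric decay of $\eomidx{k}$ are in place, the remainder is an application of $(a+b)^T\Qutilde(a+b)\leq 2a^T\Qutilde a+2b^T\Qutilde b$ and summation of a geometric series, so no further subtlety is expected.
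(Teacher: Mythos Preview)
Your argument is correct and the initial splitting step---via $(a+b)^T\Qutilde(a+b)\leq 2a^T\Qutilde a+2b^T\Qutilde b$ followed by $x^T\Qutilde x\leq\|\Qutilde\|_2\|x\|_2^2$---is essentially the same device the paper uses (there phrased through the Cholesky factor and the AM--QM inequality), yielding the identical decomposition $2\|\Qutilde\|_2\|\feidx{k}\|_2^2 + \eomidx{k}^T(2\|\Qutilde\|_2 K^TK)\eomidx{k}$.

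The genuine difference is in how $\sum_{k\geq s}\|\feidx{k}\|_2^2$ is controlled. You invoke Lipschitz continuity of $\mathrm{f}$ on the compact invariant set $\E$ together with $\mathrm{f}(0)=0$ to obtain $\|\feidx{k}\|_2\leq L\|\eomidx{k}\|_2$, and then sum a single geometric series in $\|\eomidx{k}\|_2^2$, producing $\Theta_1(e_s)=c\|\eomidx{s}\|_2^2$. The paper instead exploits that $\|\fe\|_2^2$ is a polynomial in the components of $\eom$: it expands the polynomial into monomials, uses the diagonal closed-loop law $\eomidx{k,[m]}=(1-\dt\ki{m})^{k-s}\eomidx{s,[m]}$ to turn each monomial into its own geometric series, and sums term by term. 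This gives $\Theta_1$ as an explicit polynomial in the components of $\eomidx{s}$. Your route is shorter and more transparent; the paper's route yields a closed-form expression for $\Theta_1$ that can be plugged directly into the terminal cost \eqref{eq:terminal_cost} without having to estimate a Lipschitz constant, and it does not rely on $\eomidx{k}$ remaining in $\E$ (although that is guaranteed anyway by \cref{lem:stability_term_contr}).
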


\begin{lemma} \label{lem3}
Consider system \cref{eq:err_dyn_orbiting} under the terminal control law \cref{eq:term_contr} and assume $e_s \in \Xterm$ for some $s \in \N$. Then, 
    $\sum_{k=s}^{\infty} (\ufblin)^T \Qutilde \utermCeMPC \leq \Theta_2(\eomidx{s})$
where $\Qutilde$ is the same as in \cref{lem2}, $\Theta_2: \R^3 \rightarrow \R$ satisfies
$\Theta_2(e_0) \leq \alpha_{2,2}(\Vert e_0 \Vert)$ and $\alpha_{2,2}: \R_{\geq 0}\rightarrow \R_{\geq 0}$ is a \classkappainf function. 
\end{lemma}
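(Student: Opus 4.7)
The plan is to exploit the exponential decay of $\eomidx{k}$ under the terminal controller $\contrTerm$, combined with a uniform bound on $\controlEmpc_k$ over the compact eMPC feasible set $\XempcFeasible$, so that the cross term is controlled by a single geometric series in $\Vert\eomidx{s}\Vert$. Substituting $\contrTerm$ into the $\eom$-subsystem gives the closed loop $\eomidx{k+1}=(I_3-\dt\K)\eomidx{k}$ from \cref{eq:omnew_ter}; the gain choice $0<\ki{i}<1/\dt$ makes $I_3-\dt\K$ Schur, so $\Vert\eomidx{k}\Vert\leq\rho^{k-s}\Vert\eomidx{s}\Vert$ with $\rho:=\max_i|1-\dt\ki{i}|<1$ for every $k\geq s$.

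Next I would bound the two factors of the inner product separately. Since $\mathrm{f}$ in \cref{eq:def_feom} is continuously differentiable with $\mathrm{f}(0)=0$ (as noted following that equation via \cref{eq:condition}) and $\E$ is compact, there is $L_f>0$ such that $\Vert\mathrm{f}(\eomidx{k})\Vert\leq L_f\Vert\eomidx{k}\Vert$ on $\E$; together with $\Vert\K\eomidx{k}\Vert\leq\Vert\K\Vert_2\Vert\eomidx{k}\Vert$ and the triangle inequality this yields $\Vert\ufblin\Vert\leq(L_f+\Vert\K\Vert_2)\Vert\eomidx{k}\Vert$. For $\utermCeMPC$, since $e_s\in\Xterm=\XempcFeasible\times\E$ and $\XempcFeasible$ is forward invariant under $\controlEmpc$ by the eMPC construction of \cite{Alessio2009}, $e_{p,k}\in\XempcFeasible$ for all $k\geq s$. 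As $\controlEmpc$ is a continuous piecewise affine map on the compact set $\XempcFeasible$, there exists $C_{\hat\mu}\geq 0$ with $\Vert\controlEmpc_k\Vert\leq C_{\hat\mu}$; orthogonality of $\Rot$ then gives $\Vert\utermCeMPC\Vert=\Vert\controlEmpc_k\Vert\leq C_{\hat\mu}$.

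Combining via $|v^T\Qutilde w|\leq\Vert\Qutilde\Vert_2\Vert v\Vert\Vert w\Vert$ and summing the geometric series,
\begin{equation*}
\sum_{k=s}^{\infty}(\ufblin)^T\Qutilde\utermCeMPC\leq \Vert\Qutilde\Vert_2 C_{\hat\mu}(L_f+\Vert\K\Vert_2)\sum_{k=s}^{\infty}\rho^{k-s}\Vert\eomidx{s}\Vert=\frac{\Vert\Qutilde\Vert_2 C_{\hat\mu}(L_f+\Vert\K\Vert_2)}{1-\rho}\Vert\eomidx{s}\Vert.
\end{equation*}
Defining $\Theta_2(\eomidx{s})$ as the right-hand side gives a linear, hence \classkappainf, upper bound with $\alpha_{2,2}(r)=\tfrac{\Vert\Qutilde\Vert_2 C_{\hat\mu}(L_f+\Vert\K\Vert_2)}{1-\rho}\,r$; $\Theta_2(0)=0$ is consistent with $\ufblin\equiv 0$ whenever $\eomidx{s}=0$.

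The main obstacle is establishing the uniform bound $C_{\hat\mu}$ and the forward invariance of $\XempcFeasible$ under $\controlEmpc$; both follow from the standard stabilizing eMPC terminal ingredients \cite[Ch.~2.5.4]{rawlings2017model} and the compact, polyhedral form of $\Uempc\subset\U$, but they must be invoked explicitly since the whole argument collapses unless $C_{\hat\mu}$ is finite independently of $e_{p,s}$—this is precisely what allows $\Theta_2$ to depend only on $\eomidx{s}$ as required by the statement.
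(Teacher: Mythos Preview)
Your argument is correct and in fact more streamlined than the paper's. Both proofs begin the same way: bound $\Vert\utermCeMPC\Vert$ uniformly by the maximal eMPC control magnitude (the paper calls this $\controlEmpc_{\max}$), then control the remaining factor $(\ufblin)^T\Qutilde$ through the geometric decay of $\eomidx{k}$ under \cref{eq:omnew_ter}. The difference lies in how that second factor is handled. The paper expands $\Vert(\ufblin)^T\Qutilde\Vert_2^2$ as an explicit multivariate polynomial in the components $\eomidx{k,[m]}$, pulls each monomial through the componentwise recursion $\eomidx{k,[m]}=(1-\dt\ki{m})^{k-s}\eomidx{s,[m]}$, and sums term by term, yielding a closed-form $\Theta_2$ built from the polynomial coefficients. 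You instead invoke a single Lipschitz estimate $\Vert\ufblin\Vert\leq(L_f+\Vert\K\Vert_2)\Vert\eomidx{k}\Vert$ on the compact box $\E$ and collapse everything into one geometric series in $\rho=\max_i|1-\dt\ki{i}|$. Your route is more elementary and gives a $\Theta_2$ that is linear in $\Vert\eomidx{s}\Vert$, at the price of a possibly looser constant (the paper's monomial-by-monomial summation exploits the individual rates $1-\dt\ki{m}$ rather than only the worst one). One minor simplification you could adopt from the paper: the uniform bound $C_{\hat\mu}$ follows immediately from the constraint $\controlEmpc_k\in\Uempc$ with $\Uempc$ compact, so you need not argue via compactness of $\XempcFeasible$ and continuity of the piecewise-affine map.
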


\newcommand{\Qet}{\Qeempc}
\newcommand{\Qer}{Q_{e,\omega}}
\newcommand{\Qut}{\bar{Q}_{u,p}}

Next, let $\Theta_3(e_s) := \eomidx{s}^T P \eomidx{s},$ for $s \in \N$
where $P \in \R^{3 \times 3}$ is the solution to the Lyapunov equation for an autonomous system with cost matrix $\Qer + 2 \Vert \Qutilde \Vert_2 \K^T \K \succ 0$ and linear dynamics \cref{eq:omnew_ter}. 

Further, using \eqref{eq:term_contr}, $\Theta_1(\cdot),\Theta_2(\cdot)$ defined in Lemmas 2 and 3, $\Theta_3(\cdot)$ and the parametric cost $\termC(\cdot)$ obtained when solving the eMPC in \eqref{eq:empc_contr}, we define $l_{\mathcal{T}}(e_s), s\in \mathbb{N}$ as:
\begin{equation}
\termC(e_s) := \costEmpc(e_{p, s}) + \Theta_1(e_s) + \Theta_2(e_s)  + \Theta_3(e_s).     \label{eq:terminal_cost}  
\end{equation}
Based on the above we can deduce the following:
\begin{proposition}\label{prop:term_cost}
Consider the system \cref{eq:err_dyn_orbiting} under the terminal control law \cref{eq:term_contr} and assume that $e_s\in \Xterm$ for some $s\in \mathbb{N}.$ Assume further that Asms. \ref{as_reftraj},\ref{as:traj_dynfeas},\ref{as:QQ_matrix} hold. Let further $l_{\mathcal{T}}(e_s)$ be defined as in \eqref{eq:terminal_cost}. Then, 
$$\sum_{k=s}^{\infty} e_k^T \Qe e_k + (\contrTerm_k)^T \Qu \contrTerm_k \leq l_{\mathcal{T}}(e_s).$$
\end{proposition}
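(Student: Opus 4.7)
My plan is to decompose the stage cost incurred under the terminal control law \eqref{eq:term_contr} into four groups of terms, one matching each summand of $\termC$ in \eqref{eq:terminal_cost}, and to bound each group using \cref{lem2,lem3}, the optimality of the eMPC parametric cost $\costEmpc$, and a discrete Lyapunov calculation on $\eomidx{k}$.

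First, by \cref{as:QQ_matrix} the state cost separates as $e_k^T \Qe e_k = e_{p,k}^T \Qeempc e_{p,k} + \eomidx{k}^T \Qer \eomidx{k}$. Substituting \eqref{eq:term_contr} gives $\contrTerm_k - \Rotfullidx{k}\utrajk = \Minv(\ufblin + \utermCeMPC)$, so, using the definition $\Qutilde = (\Minv)^T \Qu \Minv$, the corresponding input stage cost can be rewritten as $(\ufblin + \utermCeMPC)^T \Qutilde (\ufblin + \utermCeMPC)$. Expanding this square yields three pieces: $\ufblin^T \Qutilde \ufblin$, $2\,\ufblin^T \Qutilde \utermCeMPC$, and $\utermCeMPC^T \Qutilde \utermCeMPC$.

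I would then sum from $k = s$ to infinity and treat each piece separately. \cref{lem2} directly bounds the first piece by $\Theta_1(e_s) + 2\Vert \Qutilde \Vert_2 \sum \eomidx{k}^T K^T K \eomidx{k}$, while \cref{lem3} (with a factor of two carrying the cross-term multiplier) absorbs the second into $\Theta_2(e_s)$. The remaining linear-subsystem contribution $\sum [ e_{p,k}^T \Qeempc e_{p,k} + \utermCeMPC^T \Qutilde \utermCeMPC ]$ is dominated by the eMPC optimal cost $\costEmpc(e_{p,s})$ via the standard stabilizing-terminal-cost argument of \cite[Ch.~2.5.4]{rawlings2017model}, provided $\Quempc$ has been chosen to upper-bound the upper-left $3{\times}3$ block of $\Qutilde$ sandwiched by any $\Rot \in SO(3)$ -- a scalar choice $\Quempc \succeq \lambda_{\max} I$ on that block suffices. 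Finally, the leftover angular-velocity cost $\sum \eomidx{k}^T (\Qer + 2\Vert \Qutilde \Vert_2 K^T K) \eomidx{k}$, evaluated along the Schur-stable loop $\eomidx{k+1}=(I-\dt K)\eomidx{k}$ (Schur by $0<\ki{i}<1/\dt$), is precisely the Lyapunov sum $\eomidx{s}^T P \eomidx{s} = \Theta_3(e_s)$ for the $P$ introduced just above the proposition. Summing the four bounds reconstructs $\termC(e_s)$.

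The main obstacle I anticipate is the consistency of the weight choices rather than any single inequality: the residual $\eom$-cost left over by \cref{lem2} must match exactly the augmented weight $\Qer + 2\Vert \Qutilde \Vert_2 K^T K$ used in defining $\Theta_3$, and $\Quempc$ must dominate the rotated block of $\Qutilde$ uniformly in $\Rot$ so that the eMPC optimal cost dominates the $\utermCeMPC^T \Qutilde \utermCeMPC$ sum. Once these design choices are in place, the remaining steps are mechanical applications of the cited lemmas, eMPC optimality on the linear $e_p$-subsystem, and a Schur geometric sum via the discrete Lyapunov equation.
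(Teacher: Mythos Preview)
Your proposal is correct and follows essentially the same route as the paper: the same block decomposition of $Q_e$, the same expansion of $(\ufblin+\utermCeMPC)^T\Qutilde(\ufblin+\utermCeMPC)$, the same invocation of \cref{lem2,lem3} for the $\ufblin$ and cross terms, the eMPC parametric cost for the $e_p$-subsystem, and the discrete Lyapunov sum for $\Theta_3$. The only cosmetic difference is that the paper packages your ``$\Quempc\succeq\lambda_{\max}I$'' observation into an auxiliary matrix $\Quhat$ and a separate eigenvalue lemma (\cref{lem1}) to justify $\Quhat - R^T \bar Q_{u,[1:3,1:3]}R \succeq 0$ uniformly in $R\in SO(3)$.
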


\arxivVersion{
    \begin{figure}[tpb]
        \centering   \includegraphics[width=0.9\linewidth,trim={0 4cm 0 8cm},clip]{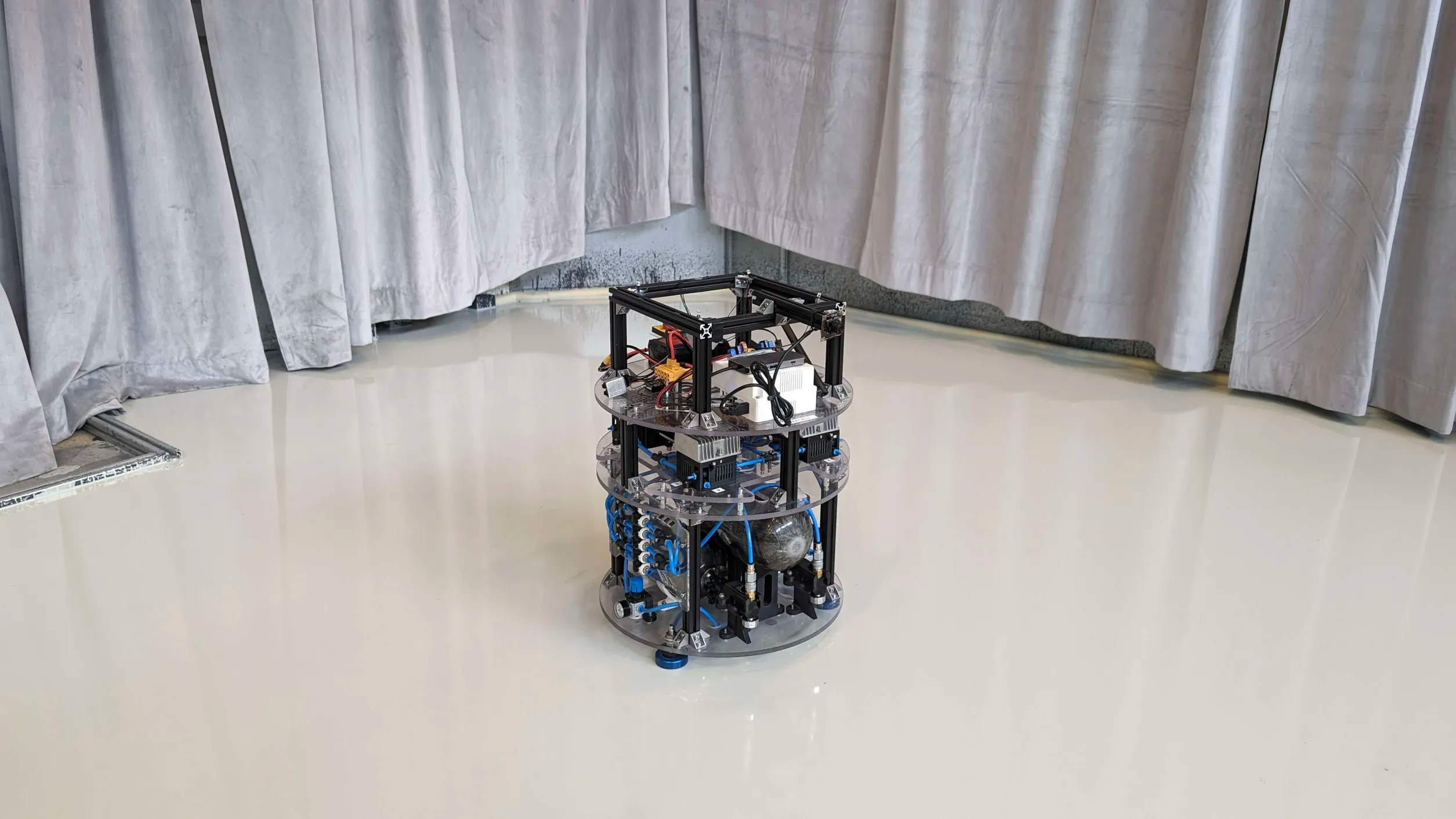}
        \caption{The ATMOS free-flyer available at the KTH Space Robotics Laboratory \cite{roque2025opensourcemodularspacesystems}.}
        \label{fig:freeflyer_picture}
        \vspace{-2em}
    \end{figure}
}

\subsection{Recursive Feasibility and Stability Analysis} \label{sec:analysis:stability}
This section examines the stability of \eqref{eq:err_dyn_orbiting} and \eqref{eq:dynamics_continuous}.
\begin{theorem} \label{prop:stability_orbit}
    Consider the system  \cref{eq:err_dyn_orbiting} and let \cref{as:zero_in_interior,as_reftraj,as:traj_dynfeas} hold. Consider further the MPC problem  \eqref{eq:mpc} with a terminal set and cost defined in \eqref{eq:terminal_set} and \eqref{eq:terminal_cost}, respectively, and assume that \eqref{eq:mpc} is feasible at $k=0$.
    Then, the closed-loop system $e_{k+1} = \phi(e_k, \contrMPC_k)$ with $\contrMPC_k$ from \eqref{eq:u_mpc} is asymptotically stable.
\end{theorem}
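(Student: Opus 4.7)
The plan is to follow the standard MPC stability template with terminal ingredients adapted to the time-varying reference setting. I would structure the argument in three stages: recursive feasibility, Lyapunov-function properties of the value function $V^{*}_N$, and a descent inequality powered by Proposition~\ref{prop:term_cost}.

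\textbf{Step 1 (Recursive feasibility).} Assuming \eqref{eq:mpc} is feasible at time $k$ with optimiser $\{u^{*}_{o,0},\ldots,u^{*}_{o,N-1}\}$ and predicted terminal state $e^{*}_N\in\Xterm$, I build the shifted candidate $\{u^{*}_{o,1},\ldots,u^{*}_{o,N-1},\hat{u}_{o,N}\}$ at time $k+1$, where $\hat{u}_{o,N}$ is extracted from the terminal law \eqref{eq:term_contr} via $\contrTerm(e^{*}_N,k+N)=\hat u_{o,N}+\Rotfullidx{N}\utraj^{\,k+N}$. By \cref{lem:stability_term_contr}, $\contrTerm(e,k)\in\U$ whenever $e\in\Xterm$ and $\Xterm$ is forward invariant under $\contrTerm$, so the candidate satisfies every input and terminal constraint; induction on $k$ delivers recursive feasibility.

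\textbf{Step 2 (Lyapunov properties of $V^{*}_N$).} On the feasible set I establish (i) the lower bound $V^{*}_N(e)\ge e^{\top}\Qe e\ge \lambda_{\min}(\Qe)\|e\|^2$ from the first stage cost; (ii) a local upper bound $V^{*}_N(e)\le \alpha_u(\|e\|)$ of class $\mathcal{K}_\infty$, obtained by choosing the all-terminal candidate (valid since $0\in\mathrm{Int}(\Xterm)$ by construction in \Cref{sec:analysis:term_set}) and combining continuity of $\hat l$, $\Theta_1,\Theta_2,\Theta_3$ with the bounds in \cref{lem2,lem3}; (iii) the descent condition $V^{*}_N(e_{k+1})-V^{*}_N(e_k)\le -\ell(e_k,u^{*}_{o,0})$, treated in Step~3.

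\textbf{Step 3 (Descent inequality).} Evaluating the shifted candidate gives the standard telescoping identity
\begin{equation*}
V^{*}_N(e_{k+1})-V^{*}_N(e_k)\le -\ell(e_k,u^{*}_{o,0})+\underbrace{\ell\bigl(e^{*}_N,\contrTerm(e^{*}_N,\cdot)\bigr)+\termC(\phi(e^{*}_N,\contrTerm))-\termC(e^{*}_N)}_{=:\Delta}.
\end{equation*}
The task reduces to proving $\Delta\le 0$. I would exploit the decomposition \eqref{eq:terminal_cost}: the eMPC value function $\hat l$ satisfies the Bellman descent for the linear subsystem \eqref{eq:empc_sys} by the construction of \eqref{eq:empc_contr}, while $\Theta_3(e)=\eom^{\top}P\eom$ satisfies the exact Lyapunov identity $\Theta_3(e^{+})-\Theta_3(e)=-\eom^{\top}(\Qer+2\|\Qutilde\|_2 K^{\top}K)\eom$ for the linear error dynamics \eqref{eq:omnew_ter}. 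Together with \cref{lem2,lem3}, which dominate the cross-terms generated by $\ufblin$ and $\utermCeMPC$ through $\Theta_1$ and $\Theta_2$, the one-step increments collapse to $-\ell(e^{*}_N,\contrTerm(e^{*}_N,\cdot))$, giving $\Delta\le 0$.

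\textbf{Step 4 (Conclusion).} With (i)--(iii) in hand, $V^{*}_N$ is a Lyapunov function on the recursively feasible set, so the origin of the closed-loop system $e_{k+1}=\phi(e_k,\contrMPC_k)$ is asymptotically stable by the standard discrete-time Lyapunov theorem. Recalling that \eqref{eq:mpc} tracks both the translational error $e_p$ and the rate error $\eom$, asymptotic convergence of $e_k$ yields bounded spacecraft tracking via \cref{as:discretization_error}, completing the proof.

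\textbf{Main obstacle.} The crux is Step~3: \Cref{prop:term_cost} supplies an \emph{upper bound on the infinite-horizon cost} under $\contrTerm$, but not directly the one-step Lyapunov decrease $\Delta\le 0$. Closing this gap requires unpacking the explicit additive structure of $\termC$ and relying on the sharper (Bellman/Lyapunov) identities satisfied individually by $\hat l$ and $\Theta_3$, with \cref{lem2,lem3} used to control the cross-coupling terms; this is where most of the technical work lies.
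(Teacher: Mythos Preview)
Your proposal is correct and follows essentially the same approach as the paper: verify the standard terminal-ingredient conditions (recursive feasibility via \cref{lem:stability_term_contr}, class-$\mathcal{K}_\infty$ bounds on the stage and terminal costs, and the terminal cost decrease obtained by unpacking the additive structure of $\termC$ exactly as you describe in Step~3) and then invoke the standard MPC stability result---the paper cites \cite[Thm.~5.13, Rem.~5.17]{grune2017nonlinear} rather than spelling out the Lyapunov argument. Your ``main obstacle'' and its resolution match the paper's actual computation, which evaluates $\termC(e_N)-\termC(e_{N+1})$ componentwise via the Bellman descent for $\hat l$, the Lyapunov identity for $\Theta_3$, and the summand identities underlying \cref{lem2,lem3}; the only extraneous item is the final sentence of Step~4, which belongs to \cref{theo:main} rather than \cref{prop:stability_orbit}.
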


\begin{theorem} \label{theo:main}
Consider the spacecraft dynamics \eqref{eq:dynamics_continuous} and let \cref{as:zero_in_interior,as_reftraj,as:traj_dynfeas,as:discretization_error} hold. Let $x:\R_{\geq 0} \rightarrow \R^{16} $ be the solution of 
\eqref{eq:dynamics_continuous} under the MPC control law $\varrho:\R_{\geq 0} \rightarrow \Ures$ defined as
$
\varrho(t)=\contrMPC_k, \quad \forall t\in [t_k, t_{k+1}), k\in \N,
$
where $\contrMPC_k, k\in \N$ is defined in \eqref{eq:u_mpc} and $t_k, k\in \N$ as in \cref{as_reftraj}. Then, as $t\rightarrow\infty$, 
$
\Vert p_r(t)-\traj_{[1:3]}(t)\Vert \leq r_s,
$
where $r_s \geq r+\beta,$ $r$ is the radius of the orbit designed in \cref{sec:control} and $\beta$ is the same as in \cref{as:discretization_error}.
\end{theorem}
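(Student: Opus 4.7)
The plan is to reduce the continuous-time spacecraft tracking claim to three additive error sources and bound each one separately using the tools already developed in the paper. Specifically, for any $t \in [t_k, t_{k+1})$, I would insert both the continuous orbit center $c(t)$ and its discrete-time sample $c_k$, and use the triangle inequality to write
\begin{equation*}
\|p_r(t)-\traj_{[1:3]}(t)\|
\leq \|p_r(t)-c(t)\| + \|c(t)-c_k\| + \|c_k - \traj_{[1:3],k}\|,
\end{equation*}
where I have used that the piecewise constant reference (\cref{as_reftraj}) satisfies $\traj_{[1:3]}(t) = \traj_{[1:3],k}$ on $[t_k, t_{k+1})$.

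The first term is geometric and time-invariant: by the coordinate relation \eqref{eq:center_p}, $c(t)-p_r(t) = \Rot(q(t))^T \rvec$, and since $\Rot(q(t))\in SO(3)$ is orthogonal it preserves norms, so $\|p_r(t)-c(t)\|=\|\rvec\|=r$. The second term is handled directly by \cref{as:discretization_error}, which gives $\|c(t)-c_k\|\le\beta$ for all $t\in[t_k,t_{k+1})$. The third term is the position component $e_{p,k,[1:3]}$ of the orbit tracking error defined in \cref{sec:orbiting}, and \cref{prop:stability_orbit} gives closed-loop asymptotic stability of $e_k$ under the MPC law $\contrMPC_k$, hence $\|c_k - \traj_{[1:3],k}\|\to 0$ as $k\to\infty$. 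Combining these three bounds, taking $t\to\infty$ (equivalently $k\to\infty$) gives
\begin{equation*}
\limsup_{t\to\infty}\|p_r(t)-\traj_{[1:3]}(t)\|\le r+\beta\le r_s,
\end{equation*}
which is the claim.

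The step that would deserve the most care is the bookkeeping between continuous and discrete time. In particular, one has to argue that feasibility of the MPC at $k=0$ (assumed in \cref{prop:stability_orbit}) together with recursive feasibility carries over so that the control $\varrho(t)=\contrMPC_k$ is well-defined and admissible for all $t\ge 0$, and that the quaternion $q(t)$ evolving between samples still gives a valid rotation so that the norm-preserving argument $\|\Rot^T\rvec\|=r$ applies at all continuous times (not just at sampling instants). This is essentially an invocation of the regularity of \eqref{eq:dynamics_continuous} and of \cref{as:discretization_error}, which packages the sample-and-hold behavior into the single constant $\beta$; beyond that, the bound follows transparently from \cref{prop:stability_orbit} and the geometric identity \eqref{eq:center_p}.
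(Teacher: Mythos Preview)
Your proposal is correct and follows essentially the same approach as the paper: the same three-term triangle-inequality split through $c(t)$ and $c_k$, with \eqref{eq:center_p} giving $r$, \cref{as:discretization_error} giving $\beta$, and \cref{prop:stability_orbit} driving the discrete tracking error to zero. If anything, your use of $\limsup$ and the direct appeal to asymptotic stability is slightly cleaner than the paper's intermediate monotonicity and switching remarks.
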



\section{Numerical Results}
\label{sec:application}
This section presents results for the developed controller implemented in 2D on the ATMOS platform \cite{roque2025opensourcemodularspacesystems} and simulated in 3D\footnote{
Both code and a small video of the experiment are available in \url{https://github.com/DISCOWER/fault-tolerant-mpc}. The inertial parameters are available in the file \href{https://github.com/DISCOWER/fault-tolerant-mpc/blob/main/data/InertialProperties.md}{\texttt{data/InertialParameters.md}}.
}. 
Both controllers used Python with CasADi \cite{Andersson2019} and \texttt{pympc} \cite{pympc} with a sampling time $\dt = \qty{0.1}{\second}$ and ran on an HP ENVY x360 laptop (AMD Ryzen™ 7 5700U CPU). ROS2 was used as interface to the ATMOS platform, and Gazebo for simulation.

\paragraph*{2D implementation}
The 2D implementation was done with a simplified model and controller where all variables outside of the $x$-$y$-plane and their respective inputs are zero, representing the ATMOS free-flyer \cite{roque2025opensourcemodularspacesystems}.
The robot has eight thrusters, composing $D$ according to 
\begin{align*}
    D = \left[ \begin{smallmatrix}
        1& -1&  1& -1&  0&  0&  0&  0\\
        0&  0&  0&  0&  1& -1&  1& -1\\
        d& -d& -d&  d&  d& -d& -d&  d
    \end{smallmatrix} \right],
\end{align*}
where $d=\qty{12}{\cm}$. 
If three or less actuators fail, \cref{as:zero_in_interior} is satisfied: as four actuators produce positive and four negative torque (see the third row of $D$), three functional ones can compensate for three failed ones. One thruster pair remains, which can produce both positive and negative torque. Note that \cref{as:zero_in_interior} does not require $0 \in \Uresidx{1:2}$, i.e., a resulting force after the compensation is allowed.

The experiment video, available in the codebase repository, shows fault-free trajectory tracking of a circular path until second $12$, when three actuators fail with $f_{1} = f_{3} = f_{5} = \qty{0}{\newton}$. The system recovers by adaptively switching to the proposed controller and tracking an orbit around the trajectory. We choose $\fvirt = \qty{1.98}{\newton}$ (the Chebyshev center of $\Ures$) and $\omegades = \qty{0.5}{\radian\per\second}$, giving $r = \qty{0.47}{\metre}$.
The control horizon is $N = 20$, the tuning $\Qe = I_{5 \times 5}$ and $\Qu = 0.1 I_{3 \times 3}$, $\K = 0.1$ and eMPC horizon $\hat{N}=50$. The controller shows good robustness to parameter errors as the thrusters are fueled through gas tanks and the flushing out gas changes $m$, $J$ and $\fmax$ over time.
\CDCversion{The terminal set resulting from our approach is evaluated in \cite{stockner2025}.}

\arxivVersion{
    \Cref{fig:comparison_term_sets} shows the terminal sets calculated from two different terminal controllers for \cref{eq:mpc}, taken from \cite{raphael}: the blue one corresponds to the \cref{eq:term_contr}, while the red one replaces the eMPC part in \cref{eq:term_contr} with an LQR controller. As the eMPC controller handles actuator saturations, the resulting terminal set is larger.

    \begin{figure}[t!]
    \begin{center}
    \includegraphics[width=0.6\columnwidth]{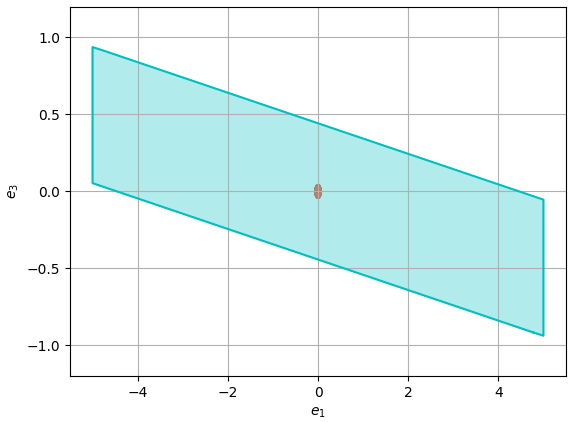}
        \caption{Slice through $\Xterm$ for $e_2=e_3=e_5=0$. Red: Linear controller. Cyan: eMPC controller. Taken from \cite{raphael}.}
        \label{fig:comparison_term_sets}
    \end{center}
    \end{figure}
}

\paragraph*{3D simulation}
For the 3D scenario, a spacecraft was simulated.
It suffers $f_{11} = f_{12} = \fmax$, i.e. maximal acceleration into the positive $y$-direction. The system initializes with failures at the state $p_r = \mat{1&0&1}^T$\unit{\metre}, $v_r = \mat{1&0&0.5}^T$\unit{\metre\per\second}, $q = \mat{0.033&0.27&0.39&0.88}^T$ and $\omega = \mat{0.3&0.8&-0.1}^T$\unit{\radian\per\second} tracking a constant trajectory at the origin (i.e., a fixed setpoint).
The parameters are $N = 15$, $\Qe = \diag(1, 1, 1, 1, 1, 1, 2, 2, 2)$, $\Qu = \diag(0.1, 0.1, 0.1, 0.01, 0.01, 0.01)$ and in the terminal controller $\K = I_{3 \times 3}$ and eMPC horizon $\hat{N} = 15$. The spiral parameters are $\omegades = \qty{0.6}{\radian\per\second}$ and $\fvirt = \qty{3.5}{\newton}$ so $r$ follows as \qty{0.58}{\metre}. The resulting spacecraft and orbit center paths are shown in \cref{fig:three_d_path} and individual states and inputs in \cref{fig:states_inputs}. The system under failures is quickly stabilized to the setpoint.
\begin{figure}
\begin{center}
  \begin{tikzpicture}
    \tikzstyle{pathlinestyle}=[
          mark=|,
          mark repeat=5
        ]
    \begin{axis}[
          view={210}{30},
          xlabel=$x$,
          ylabel=$y$,
          zlabel=$z$,
          z label style={rotate=90},
          label shift=-4mm,
          grid=major,
          height=5cm,
          legend style={
            font=\footnotesize,
            at={(0.52,0.98)},
            anchor=north west,
          },
        ]
        \addplot3+ [pathlinestyle, mark=x] table[x=position_x,y=position_y,z=position_z,col sep=semicolon] {debug_data.csv};
        \addplot3+ [pathlinestyle] table[x=circle_position_x,y=circle_position_y,z=circle_position_z,col sep=semicolon] {debug_data.csv};
        \legend{Spacecraft, Orbit center}
    \end{axis}
  \end{tikzpicture}
\caption{Spacecraft path and orbit center. Plotted is the position in meter at different time steps, with one mark every \qty{0.5}{\second}.}
\label{fig:three_d_path}
\vspace{-1em}
\end{center}
\end{figure}
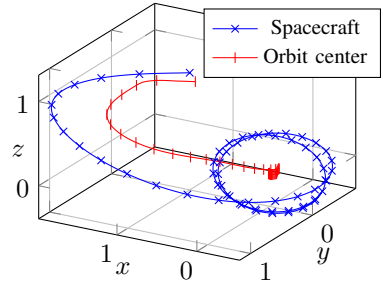

\newcommand{\myfontsize}{\normalfont}
\newcommand{\myfontweight}{\fontseries{l}\selectfont} 

\tikzstyle{mylinestyle}=[
          mark=none,
          line width=1.2pt 
        ]

\pgfplotsset{
    tick style={draw=none},
    axis line style={line width=0.15pt},
    myplotstyle/.style={
        baseline,
        width=0.32\linewidth, 
        height=22mm,
        grid=major, 
        grid style={gray!30},
        legend style={at={(0.98,0.98)},anchor=north east,font=\tiny,row sep=-3pt},
        scale only axis,
        ylabel shift=-2mm,
        tick label style={font=\footnotesize, text=black}, 
        label style={font=\footnotesize, text=black},
    },
    myplotstyleright/.style={
      myplotstyle,
      yticklabel pos=right,
      ylabel near ticks,
      label style={font=\footnotesize, text=black},
    }
}
\renewcommand{\arraystretch}{-1.1} 

\begin{figure}
\begin{center}
\begin{tabular}{r@{\hspace{2pt}}l}
    \begin{tikzpicture}
    \pgfplotsset{small} 
      \begin{axis}[
          myplotstyle,
          ylabel=Position Error,
          y unit=\si{\meter},
        ]
        \addplot[mylinestyle,color=orange]           table[x=time,y=circle_position_x,col sep=semicolon] {debug_data.csv}; 
        \addplot[mylinestyle,color=blue]            table[x=time,y=circle_position_y,col sep=semicolon] {debug_data.csv}; 
        \addplot[mylinestyle,color=magenta] table[x=time,y=circle_position_z,col sep=semicolon] {debug_data.csv}; 
        \legend {X, Y, Z}
      \end{axis}
    \end{tikzpicture}
  &
    \begin{tikzpicture}
    \pgfplotsset{small} 
      \begin{axis}[
          myplotstyleright,
          ylabel=Velocity Error,
          y unit=\si{\meter\per\second},
        ]
        \addplot[mylinestyle,color=orange]            table[x=time,y=circle_velocity_x,col sep=semicolon] {debug_data.csv}; 
        \addplot[mylinestyle,color=blue]             table[x=time,y=circle_velocity_y,col sep=semicolon] {debug_data.csv}; 
        \addplot[mylinestyle,color=magenta]  table[x=time,y=circle_velocity_z,col sep=semicolon] {debug_data.csv}; 
        \legend {X, Y, Z}
      \end{axis}
    \end{tikzpicture}
  \\
    \begin{tikzpicture}
      \begin{axis}[
          myplotstyle,
          xlabel=time, 
          ylabel=Input Force,
          x unit=\si{\second}, 
          y unit=\si{\newton},
          ymin=-7,
          ymax=7,
        ]
        \addplot[mylinestyle,color=orange]           table[x=time,y=force_x,col sep=semicolon] {debug_data.csv};
        \addplot[mylinestyle,color=blue]            table[x=time,y=force_y,col sep=semicolon] {debug_data.csv}; 
        \addplot[mylinestyle,color=magenta] table[x=time,y=force_z,col sep=semicolon] {debug_data.csv}; 
        \legend {X, Y, Z}
      \end{axis}
    \end{tikzpicture}
  & 
    \begin{tikzpicture}
      \begin{axis}[
          myplotstyleright,
          xlabel=time, 
          ylabel=Input Torque,
          x unit=\si{\second}, 
          y unit=\si{\newton\meter},
        ]
        \addplot[mylinestyle,color=orange]           table[x=time,y=torque_x,col sep=semicolon] {debug_data.csv}; 
        \addplot[mylinestyle,color=blue]            table[x=time,y=torque_y,col sep=semicolon] {debug_data.csv}; 
        \addplot[mylinestyle,color=magenta] table[x=time,y=torque_z,col sep=semicolon] {debug_data.csv}; 
        \legend {X, Y, Z}
      \end{axis}
    \end{tikzpicture}
\end{tabular}
\caption{Errors (in global coordinates) and inputs (in body coordinates) of the 3D spacecraft.}
\vspace{-2em}
\label{fig:states_inputs}
\end{center}
\end{figure}
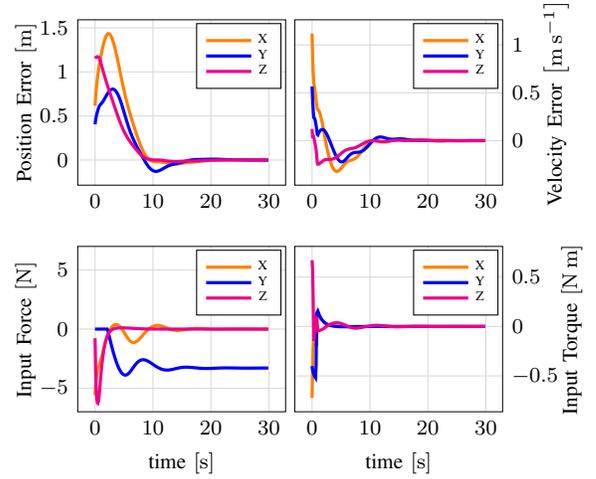

\section{Conclusions and Future Work}
\label{sec:summary}
We propose an MPC scheme for spacecraft control under actuator failures that exploits the spacecraft's dynamics on a local orbit for partial error compensation. A novel feedback-linearization and eMPC-based terminal controller achieves larger terminal sets than existing approaches and enables orbit center trajectory tracking for safe deorbiting or navigation to repair hubs despite failures. Future work will extend to robust MPC and more complex system dynamics.

\printbibliography

\section{Appendix: Proofs}
\label{sec:appendix:proofs}
\begin{proof}[Proof of Lemma \ref{lem:stability_term_contr}]
    \Cref{eq:term_cond2} follows directly by construction from \cref{eq:constraints_term_set_vol_max}, \cref{eq:term_contr} and the definition of the Minkowski sum.
    Next, we show the validity of \cref{eq:term_cond1}. Given $e_k \in \Xterm$, \cref{eq:term_cond2} guarantees that $\contrTerm_k \in \U$. At step $k$, the dynamics of $e_p$ and $\eom$ thus simplify to the linear systems in \eqref{eq: closedloop_term}.
    By choice of $\K$, it follows that $|e_{\omega, k+1}| \leq |e_{\omega, k}|$ for each component. Therefore, by convexity of $\E$ and since $0 \in \Interior{\E},$ it follows that $\eomidx{k} \in \E \Rightarrow \eomidx{k+1} \in \E$.
   Under appropriately chosen terminal ingredients the eMPC  \eqref{eq:empc_contr} can be shown to be recursively feasible. Then, considering the definition of $\XempcFeasible,$ we can conclude that $e_{p,k} \in \XempcFeasible \Rightarrow e_{p,k+1} \in \XempcFeasible$ and \cref{eq:term_cond1} follows.
     Finally, since $\controlEmpc_k$ asymptotically stabilizes \cref{eq:empc_sys}, and $\K$ is chosen to make \cref{eq:omnew_ter} asymptotically stable, we have $e_k \rightarrow 0$ as $k \rightarrow +\infty.$ This concludes the proof.
\end{proof}


\begin{proof}[Proof of Lemma \ref{lem2}]
Let $L$ be the matrix obtained by the Cholesky decomposition $\Qutilde,$ i.e., it satisfies $\Qutilde = L L^T.$ Then,
\begin{align*}
        &(\ufblin)^T \Qutilde \ufblin  =  \left\Vert L^T \ufblin \right\Vert_2^2 \leq \Vert L^T \Vert_2^2 \left\Vert \ufblin \right\Vert_2^2 \\
        \leq &2 \Vert \Qutilde \Vert_2 \left( \Vert \feidx{k} \Vert_2^2 + \left\Vert \mat{0 \\ \K \eomidx{k}} \right\Vert_2^2 \right) \\
        = &2 \Vert \Qutilde \Vert_2 \Vert \feidx{k} \Vert_2^2 + \eomidx{k}^T (2 \Vert \Qutilde \Vert_2 \K^T \K) \eomidx{k},
    \end{align*}
where in the second inequality the Arithmetic-Mean-Quadratic-Mean inequality is used.
Since $\feidx{k}$ is a polynomial, $\Vert \feidx{k}) \Vert_2^2$ is also a polynomial and thus can be expressed using suitable constants $\varsigma_{i,j,l} \in \R$.
    This enables calculating the infinite-horizon sum of the term using \eqref{eq:omnew_ter}. Here, we use $\iota_1=i, \iota_2=j, \iota_3=l$ and index the $m$-th vector row of $\eomidx{k}$ as $\eomidx{k,[m]}$:
    \begin{align*}
        &\sum_{k=s}^{\infty}\Vert \feidx{k} \Vert_2^2  =: \frac{1}{2 \Vert \Qutilde \Vert_2} \sum_{k=s}^{\infty} \Lambda_1(e_k) \\
        = & \sum_{k=s}^{\infty}\sum_{0\leq i,j,l \leq 4} \varsigma_{i, j, l} \prod_{m=1}^3 \eomidx{k,[m]}^{\iota_m} \\
        = &\sum_{k=s}^{\infty}\sum_{0\leq i,j,l \leq 4} \varsigma_{i, j, l} \prod_{m=1}^3 \eomidx{s,[m]}^{\iota_m} [(1-\dt \ki{m})^{\iota_m}]^k \\
        = & \sum_{0\leq i,j,l \leq 4} \varsigma_{i, j, l} \frac{1}{1 - \prod_{m=1}^3(1-\dt \ki{m})^{\iota_m}} \prod_{m=1}^3 \eomidx{s,[m]}^{\iota_m} \\
        =: & \frac{1}{2 \Vert \Qutilde \Vert_2} \Theta_1(e_s).
    \end{align*}
    In the second to last step, the geometric series has been applied and due to the choice of $\K$ it holds $|1-\dt \ki{m}|< 1$. 
    Note that \cref{lem:stability_term_contr} ensures the feedback linearization from \cref{eq:err_dyn_orbiting} to \cref{eq:omnew_ter} remain valid for all $k \in \N$.
    
    $\Theta_1(\cdot)$ satisfies $\Theta_1(0)=0$ and $\Theta_1(\Vert e_0 \Vert)$ strictly increasing with $\Theta_1(\Vert e_0 \Vert)\rightarrow \infty$ for $e_0 \rightarrow \infty$. The last statement follows by equality with a sum of 2-norms. Thus, $\Theta_{1}(e_0) \leq \alpha_{2,1}(\Vert e_0 \Vert)$.
\end{proof}


\begin{proof}[Proof of Lemma \ref{lem3}]
    First, consider
    \begin{align*}
        &(\ufblin)^T \Qutilde \utermCeMPC
        \leq \left\Vert (\ufblin)^T \Qutilde \right\Vert_2  \left\Vert \utermCeMPC \right\Vert_2 
        \leq \controlEmpc_{\max} \left\Vert (\ufblin)^T \Qutilde \right\Vert_2 
    \end{align*}
    with the maximal control input of the eMPC controller $\controlEmpc_{\max}$.
    
    An approach similar to the proof of \cref{lem2} can be applied introducing the polynomial coefficients $\rho_{i,j,l} \in \R$ 
    \begin{align*}
        &\sum_{k=s}^{\infty}\left\Vert (\ufblin)^T \Qutilde \right\Vert_2 \\
        \leq& \sum_{k=s}^{\infty} \sum_{0\leq i,j,l \leq 4} \sqrt{ \left\vert \rho_{i, j, l} \prod_{m=1}^3 \eomidx{k,[m]}^{\iota_m} \right\vert} 
        =:\frac{1}{2 \controlEmpc_{\max}} \sum_{k=s}^{\infty} \Lambda_2(e_k) \\
        = &\sum_{k=s}^{\infty} \sum_{0\leq i,j,l \leq 4} \sqrt{ 
        \left\vert \rho_{i, j, l} \prod_{m=1}^3 \eomidx{s,[m]}^{\iota_m} \right\vert} \left[ \prod_{m=1}^3(1 - \dt \ki{m})^{\frac{\iota_m}{2}} \right]^k \\
        = &\sum_{0\leq i,j,l \leq 4} \frac{1}{1 - \prod_{m=1}^3(1-\dt \ki{m})^{\frac{\iota_m}{2}}} \sqrt{ \left\vert \rho_{i, j, l} \prod_{m=1}^3 \eomidx{s,[m]}^{\iota_m} \right\vert} \\
        := & \frac{1}{2 \controlEmpc_{\max}} \Theta_2(e_{s})
   \end{align*}
    with a similar argument as for \cref{lem2}. 
\end{proof}
Before we proceed with the proof of Proposition \ref{prop:stability_orbit}, we will first introduce the following technical lemma:
\begin{lemma} \label{lem1}
Consider matrices $A, B, C \in \R^{n \times n}$ with $\min (\lambda(A)) \geq \max( (\lambda (C))$, $A$, $C$ symmetric and $B$ orthogonal. Then, $A - B^{-1} C B \succeq 0.$ 
\end{lemma}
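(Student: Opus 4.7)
The plan is to establish $A - B^{-1} C B \succeq 0$ directly from the definition of positive semi-definiteness by evaluating the quadratic form on an arbitrary vector. Since $B$ is orthogonal, I would first exploit the two basic properties $B^{-1} = B^T$ and $\|Bx\|_2 = \|x\|_2$, which reduces the claim to showing $x^T A x \geq (Bx)^T C (Bx)$ for every $x \in \R^n$.

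The key step is then to bound each side by its extremal Rayleigh quotient. Because $A$ is symmetric, the Rayleigh–Ritz inequality gives $x^T A x \geq \lambda_{\min}(A) \, \|x\|_2^2$. Similarly, since $C$ is symmetric, applying Rayleigh–Ritz to the vector $y = Bx$ yields $(Bx)^T C (Bx) \leq \lambda_{\max}(C) \, \|Bx\|_2^2 = \lambda_{\max}(C) \, \|x\|_2^2$, where the second equality uses the isometry property of $B$.

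Combining the two bounds with the hypothesis $\lambda_{\min}(A) \geq \lambda_{\max}(C)$ produces
\[
x^T (A - B^{-1} C B) x \;\geq\; \bigl(\lambda_{\min}(A) - \lambda_{\max}(C)\bigr) \|x\|_2^2 \;\geq\; 0,
\]
which holds for every $x \in \R^n$ and hence yields $A - B^{-1}CB \succeq 0$. There is no real obstacle here: the argument is a two-line Rayleigh-quotient computation, and the only thing to watch is that orthogonality of $B$ is invoked both to replace $B^{-1}$ by $B^T$ and to preserve Euclidean norms. An equivalent alternative would be to note that $B^T C B$ is symmetric and orthogonally similar to $C$, so it has the same eigenvalues as $C$, after which Weyl's inequality $\lambda_{\min}(A - B^T C B) \geq \lambda_{\min}(A) - \lambda_{\max}(B^T C B)$ closes the argument.
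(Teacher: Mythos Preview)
Your proposal is correct and follows essentially the same Rayleigh-quotient argument as the paper: the paper also evaluates the quadratic form $x^T(A - B^{-1}CB)x$, bounds each term by the extremal eigenvalue, and invokes $\lambda(B^{-1}CB)=\lambda(C)$ from similarity (your ``equivalent alternative'' is almost verbatim the paper's route). The only cosmetic difference is that the paper first records the symmetry of $B^{-1}CB$ and then applies the eigenvalue bound to that matrix, whereas you apply the bound directly to $C$ via the substitution $y=Bx$ and the isometry $\|Bx\|=\|x\|$.
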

\begin{proof}
First, notice that $(B^{-1} C B)^T = B^T C^T (B^{-1})^T = B^{-1} C B$, i.e. it is symmetric due to $C$ symmetric and $B$ orthogonal. For any vector $x\in \R^n$ we have:
\begin{align*}
x^T [A - B^{-1} C B] x 
&= x^TAx - x^T [B^{-1}CB]x \\
        &\geq x^T x [\min (\lambda (A)) - \max(\lambda (B^{-1}CB))] \\
        &= x^T x [\min (\lambda(A)) - \max(\lambda (C))] \geq 0.
    \end{align*}
where the last equality follows by  $\lambda(B^{-1} C B) = \lambda(C)$ \cite{hungerford2012algebra} and $\min (\lambda (A)) \geq \max( \lambda (C))$. Thus, $A - B^{-1} C B \succeq 0.$

\end{proof}
\begin{proof}[Proof of Proposition \ref{prop:term_cost}]
Given $\bar{Q}_u$ choose $\Quhat$ such that $ \Quhat - \Rot^T \bar{Q}_{\mathrm{u}, [1:3, 1:3]} \Rot \succeq 0,$ where $R$ is the rotation matrix of \eqref{eq:dynamics_continuous}.
This matrix is guaranteed to exist by \cref{lem1}.
Next, observe $(\utermCeMPC)^T \Qutilde \utermCeMPC = \controlEmpc_k^T \Rot^T \bar{Q}_{\mathrm{u}, [1:3, 1:3]} \Rot \controlEmpc_k \leq \controlEmpc_k^T \Quhat \controlEmpc_k$. As a result, the following holds:
\begin{equation}
\sum_{k=s}^{\infty} \left\{ e_{p,k}^T \Qet e_{p,k} + \controlEmpc_k^T \Quhat \controlEmpc_k \right\} \leq \costEmpc(e_{p, s}), \label{eq:upper1}
\end{equation}
where $\Qeempc$ is the submatrix of $Q_e$ in \cref{as:QQ_matrix}.  

Then, using \eqref{eq:upper1}, Lemmas \cref{lem2,lem3} and the properties of $\Theta_3(\cdot)$ we have:
\begin{equation*}
\begin{aligned}
     & \sum_{k=s}^{\infty} e_k^T \Qe e_k + (\contrTerm_k)^T \Qu \contrTerm_k \\ 
    =& \sum_{k=s}^{\infty} e_k^T \Qe e_k + (\ufblin + \utermCeMPC)^T (\Minv)^T \Qu \Minv (\ufblin + \utermCeMPC) \\
    =& \sum_{k=s}^{\infty} \left\{ e_k^T \Qe e_k +    (\utermCeMPC)^T \Qutilde \utermCeMPC + 2 (\ufblin)^T \Qutilde \utermCeMPC \right. \\
    & \qquad \left. + (\ufblin)^T \Qutilde \ufblin  \right\} \\
    \leq& \infsum \left\{ e_{p,k}^T \Qet e_{p,k} + \controlEmpc_k^T \Quhat \controlEmpc_k \right\} + \infsum \left\{ \eomidx{k}^T \Qer \eomidx{k} \right. \\
    & \qquad \left. +  2 (\ufblin)^T \Qutilde \utermCeMPC + (\ufblin)^T \Qutilde \ufblin \right\} 
    \\
    \leq& \sum_{k=s}^{\infty} \left\{ e_{p,k}^T \Qet e_{p,k} + \controlEmpc_k^T \Quhat \controlEmpc_k + \Lambda_1(e_k) + \Lambda_2(e_k) \right. \\ 
    & \qquad \left. + \eomidx{k}^T (\Qer + 2 \Vert \Qutilde \Vert_2 \K^T \K) \eomidx{k}
    \right\} \\
    =& \termC(e_s),
\end{aligned}
\end{equation*}
where in the last inequality we used the fact that $\Qer \succ 0$, $\K^T \K \succeq 0$ and where $\Lambda_1(\cdot), \Lambda_2(\cdot)$ are defined as in the proofs of \cref{lem2,lem3}.  
\end{proof}
\begin{proof}[Proof of Theorem \ref{prop:stability_orbit}]
In the following we will show the recursive feasibility of \eqref{eq:mpc} and the asymptotic stability of the closed loop system.

The \textit{recursive feasibility} property of \eqref{eq:mpc} follows from \Cref{lem:stability_term_contr} using standard arguments.
Furthermore, using $\termC(\cdot)$ in \eqref{eq:terminal_cost}, notice that $\termC(e_N)-\termC(e_{N+1})=e_{p,N}^T \Qet e_{p,N} + \controlEmpc_N^T \Quhat \controlEmpc_N + \Lambda_1(e_N) + \Lambda_2(e_N)  + \eomidx{N}^T (\Qer + 2 \Vert \Qutilde \Vert_2 \K^T \K) \eomidx{N},$ which by means of \cref{eq:terminal_cost} leads to $\termC(e_{N+1})-\termC(e_N) \leq - e_N^T \Qe e_N - \contrTerm(e_N,N)^T \Qu \contrTerm(e_N,N).$
    Since the stage cost is quadratic, it satisfies $e_k^T \Qe e_k + u_k^T \Qu u_k \geq \alpha_1(\Vert e_k \Vert), \forall k \in \N$ for a \classkappainf function $\alpha_1$.
    In addition, by \cref{lem2,lem3} and definition of $\Theta_3(\cdot),$ it follows that $\Theta_i(\cdot), i=1, 2,3 $ are all are upper bounded by a \classkappainf function and the same statement follows for $\costEmpc(\cdot)$ directly by design. Therefore, $\termC(e_N) \leq \alpha_2(\Vert e_N \Vert)$ with the \classkappainf function $\alpha_2$. 
    Furthermore, $\alpha_3(\Vert e_N \Vert) \leq \termC(e_N)$ by \cref{eq:terminal_cost}. Based on the aforementioned analysis and the properties of the terminal ingredients in \Cref{lem:stability_term_contr}, the asymptotic stability of $e_{k+1} = \phi(e_k, \contrMPC_k)$ follows from \cite[Thm. 5.13, Rem. 5.17]{grune2017nonlinear}. 
\end{proof}


\begin{proof}[Proof of Theorem \ref{theo:main}]
For $t\in [t_k,t_{k+1})$ we have:
$
\Vert p_r(t)-\traj_{[1:3]}(t)\Vert \leq \Vert p_r(t)-c(t)\Vert + \Vert c(t)-c_k \Vert + \Vert c_k-\traj_{[1:3]}(t) \Vert. 
$
By \cref{eq:center_p}, it follows that $\Vert p_r(t)-c(t)\Vert \leq r.$ In addition, due to \cref{as:discretization_error} it holds that $\Vert c(t)-c_k \Vert \leq \beta$ and due to \cref{as_reftraj}, $\traj_{[1:3]}(t)=\traj_{k,[1:3]}, t\in[t_k,t_{k+1}).$ Due to the stabilizing properties of \cref{eq:mpc} stated in \cref{prop:stability_orbit}, $\| c_{k+1} - \traj_{k+1,[1:3]}\| < \| c_k - \traj_{k,[1:3]}\|$. Solving \cref{eq:mpc} at the time step $t_{k+2}$ yields $\| c_{k+2} - \traj_{k+2,[1:3]}\| < \| c_{k+1} - \traj_{k+1,[1:3]}\| < \| c_k - \traj_{k,[1:3]}\|$. Note that at $t_{k+1}$ a switching of the control input occurs from $\Fuidx{k}$ to $\Fuidx{k+1}$. However, on
the interval $[t_k,t_{k+1})$, this switch occurs only once and thus $\Fuidx{k}$ is applied for almost all $t \in [t_k,t_{k+1}]$. Therefore, since from \cref{prop:stability_orbit},  $ \| c_k - \traj_{k,[1:3]} \| \rightarrow 0$ as $k\rightarrow\infty,$ it follows that $\Vert p_r(t)-\traj_{[1:3]}(t)\Vert \leq r_s,$ as $t \rightarrow \infty.$
\end{proof}

\section{Appendix: Simulation parameters}

\subsection{Inertial Properties of 2D Freeflyer and 3D Spacecraft Examples}

\begin{table}[h!]
    \centering
    \begin{tabular}{ccc}
    \toprule
         &  2D Freeflyer& 3D Spacecraft
         \\ \midrule
         $\delta$ [\unit{\second}]&  $0.1$& $0.1$\\
         Mass $m$ [\unit{\kilogram}]&  14.5& 16.8\\
         Inertia $J$ [\unit{\kilogram\metre\squared}]&  $0.370$& $\diag(0.2, 0.3, 0.25)$\\ Maximum force $f^{\mathrm{max}}$ [\unit{\newton}]& 1.75&1.75\\
         \bottomrule
    \end{tabular}
    \caption{Spacecraft parameters used for controller and simulation}
    \label{tab:placeholder}
\end{table}

\subsection{Allocation Matrices}
\noindent
2D ATMOS platform:
\begin{align*}
    D = 
    \left[ \begin{smallmatrix}
        1& -1&  1& -1&  0&  0&  0&  0\\
        0&  0&  0&  0&  1& -1&  1& -1\\
        d& -d& -d&  d&  d& -d& -d&  d
    \end{smallmatrix} \right]
\end{align*}
with $d=\qty{12}{\centi\meter}$.

\noindent
3D Spacecraft:
\begin{align*}
     D = 
    \left[ \begin{smallmatrix}
-1 & -1 & 1 & 1 & -1 & -1 & 1 & 1 & 0 & 0 & 0 & 0 & 0 & 0 & 0 & 0 \\ 
0 & 0 & 0 & 0 & 0 & 0 & 0 & 0 & -1 & -1 & 1 & 1 & 0 & 0 & 0 & 0 \\ 
0 & 0 & 0 & 0 & 0 & 0 & 0 & 0 & 0 & 0 & 0 & 0 & -1 & 1 & -1 & 1 \\ 
0 & 0 & 0 & 0 & 0 & 0 & 0 & 0 & 0 & 0 & 0 & 0 & -a & a & a & -a \\ 
-c & c & c & -c & -c & c & c & -c & 0 & 0 & 0 & 0 & 0 & 0 & 0 & 0\\ 
a & a & -a & -a & -a & -a & a & a & -b & b & b & -b & 0 & 0 & 0 & 0
     \end{smallmatrix} \right]
\end{align*}
with $a=\qty{12}{\centi\meter}$, $b=\qty{9 }{\centi\meter}$ and $c=\qty{5}{\centi\meter}$.

\end{document}